\newcommand{\cA}{\mathcal{A}}
\newcommand{\cX}{\mathcal{X}}
\newcommand{\cW}{\mathcal{W}}
\newcommand{\cD}{\mathcal{D}}
\newcommand{\cH}{\mathcal{H}}
\newcommand{\E}{\mathbb{E}}
\newcommand{\cQ}{\mathcal{Q}}
\newcommand{\cL}{\mathcal{L}}
\newcommand{\eps}{\varepsilon}
\DeclareMathOperator*{\sign}{sign}
\newcommand{\R}{\mathbb{R}}
\newtheorem{theorem}{Theorem}
\newtheorem{lemma}{Lemma}
\begin{document}

\title{The Impossibility of Parallelizing Boosting}

\author{Amin Karbasi\thanks{\texttt{amin.karbasi@yale.edu}. Yale University. Amin Karbasi acknowledges funding in direct support of this work from NSF (IIS-1845032), ONR (N00014- 19-1-2406), and the AI Institute for Learning-Enabled Optimization at Scale (TILOS).} \and Kasper Green Larsen\thanks{\texttt{larsen@cs.au.dk}. Aarhus University. Supported by Independent Research Fund Denmark (DFF) Sapere Aude Research Leader grant No 9064-00068B.}}



\date{}
\maketitle

\begin{abstract}
The aim of boosting is to convert a sequence of weak learners into a strong learner. At their heart, these methods are fully sequential. In this paper, we investigate the possibility of parallelizing boosting. Our main contribution is a strong negative result, implying that significant parallelization of boosting requires an exponential blow-up in the total computing resources needed for training.
\end{abstract}


\section{Introduction}
Boosting is one of the most successful ideas in machine learning, allowing one to "boost" the performance of a base learning algorithm with rather poor accuracy into a highly accurate classifier, with recent applications in adversarial training \cite{abernethy2021multiclass}, reinforcement learning \cite{brukhim2021boosting}, and federated learning \cite{shen2022federated}, among many others.  The classic boosting algorithm, known as AdaBoost~\cite{adaboost}, achieves this by iteratively training classifers on the training data set. After each iteration, the data set is reweighed and a new classifier is trained using a weighted loss function. The weights intuitively guide the attention of the base learning algorithm towards training samples that the previous classifiers struggle with. After a sufficiently large number of iterations, the produced classifiers are combined by taking a weighted majority vote among their predictions.

Both the classic AdaBoost algorithm, as well as more modern gradient boosters~\cite{gradboost,lightGBM,xgboost}, all have this highly sequential behaviour, where the algorithm runs in multiple iterations that adjust the learning problem based on previously trained classifiers/regressors. Indeed, the best performance of gradient boosters on benchmark data sets is often obtained after hundreds, or even thousands of iterations~\cite{survey}. This may be appropriate when the base learning algorithm has a small training time. However, it prevents the use of boosting in combination with e.g. medium-sized neural networks as the base learning algorithm, or if one wishes to use all available training data in a large data set. Here the sequential nature of boosting algorithms is particularly critical, as it is not possible to simply distribute the training task to many machines. This shortcoming of boosting algorithms was also highlighted in the survey by Natekin and Knoll~\cite{survey} when discussing drawbacks of gradient boosting.

In light of the above concerns, it would have a huge practical impact if a highly parallel boosting algorithm could have been developed. Unfortunately, our main result shows that parallelizing boosting cannot be done without an \emph{exponential} increase in the total work needed for training!

\paragraph{Weak to Strong Learning.}
To formalize the above claim that boosting cannot be parallelized, we need to introduce the theoretical framework in which we prove our impossibility result.

Boosting was introduced to address a theoretical question by Kearns and Valiant~\cite{kearns1988learning,kearns1994cryptographic}, asking whether a so-called \emph{weak learner} can always be converted to a \emph{strong learner}. In the following, we define these notions formally. First, let $\cX$ be an input domain and $c : \cX \to \{-1,1\}$ an unknown concept that we want to learn. 

A $\gamma$-weak learner for a concept $c : \cX \to \{-1,1\}$, is an algorithm that given some constant number of samples $m_0$ from \emph{any} unknown distribution $\cD$ over $\cX$, with constant probability returns a hypothesis $h : \cX \to \{-1,1\}$ such that $\cL_\cD(h):=\Pr_{x \sim \cD}[h(x)\neq c(x)] \leq 1/2- \gamma$. It thus has an accuracy that is $\gamma$ better than guessing. We say that the weak learner has a $\gamma$ \emph{advantage}. We remark that whether or not the number of samples $m_0$ is allowed to depend on $\gamma$ is irrelevant for our results and they apply in all circumstances.

A strong learner on the other hand, is a learning algorithm such that for any $0 < \eps,\delta < 1$, there is some number of samples $m(\eps,\delta)$, such that when given $m(\eps,\delta)$ samples from any unknown distribution $\cD$ over $\cX$, with probability at least $1-\delta$, it returns a hypothesis $h : \cX \to \{-1,1\}$ such that $\cL_\cD(h) \leq \eps$. A strong learner can thus obtain arbitrarily high accuracy when given enough training data. We refer to $m(\eps,\delta)$ as the sample complexity of the strong learner.

Kearns and Valiant thus asked whether the rather poor accuracy of a weak learner can always be exploited to obtain a classifier with arbitrarily high accuracy, i.e. a strong learner. This question was answered affirmatively~\cite{schapire1990strength}, and AdaBoost is one example of such a weak-to-strong learning algorithm. Concretely, the tightest known bounds on the sample complexity of AdaBoost states that the hypothesis $h$ it produces on a training set $S \sim \cD^m$ of $m$ samples, when using a $\gamma$-weak learner $\cW$ as the base learning algorithm, satisfies
\begin{eqnarray}
\label{eq:ada}
\cL_\cD(h) = O\left(\frac{d \ln(m) \ln(m/d) + \ln(1/\delta)}{\gamma^2 m} \right)
\end{eqnarray}
with probability at least $1-\delta$ over $S$, see e.g.~\cite{understandingMachineLearning}.
Here $d$ denotes the VC-dimension of the hypothesis set $\cH$ that the weak learner $\cW$ outputs from. It is clear from this formula that the accuracy can be made arbitrarily high when given enough training data. In particular, for any $\eps, \delta$, if we ignore log-factors, then the sample complexity $m(\eps,\delta)$ grows as $O((d+ \ln(1/\delta))/\gamma^2 \eps)$. Furthermore, this sample complexity has recently been proved optimal for any weak-to-strong learning algorithm~\cite{optimalWeakToStrong}.

In weak-to-strong learning algorithms $\cA$, such as AdaBoost, a $\gamma$-weak learner $\cW$ is typically used by feeding it a distribution $\cD$ over the training data $S$ (e.g. the weighing over the data set in each iteration of AdaBoost). It is then assumed that $\cW$ returns a hypothesis $h$ with error at most $1/2-\gamma$ under $\cD$. Note that this seems quite different from the above definition of a $\gamma$-weak learner, where the weak learner receives at least $m_0$ samples from an unknown distribution $\cD$. However, the definition of a $\gamma$-weak learner actually implies that $\cW$ can compute a hypothesis $h$ with error at most $1/2-\gamma$ for a query distribution $\cD$ specified by $\cA$. This is because $\cW$ is given access to the training data $S$ and the distribution $\cD$ and thus it can just repeatedly sample $m_0$ samples from $\cD$, compute a hypothesis from the samples, and compute its exact error probability under $\cD$. If the error probability exceeds $1/2-\gamma$, $\cW$ can just repeat with a fresh sample. Since $\cW$ is a weak learner, this terminates in an expected constant number of tries. Here one critically exploits that a weak learner has advantage $\gamma$ under \emph{any} distribution when given at least $m_0$ samples. Henceforth, we thus think of a weak learner $\cW$ as a procedure that we can query with a training set $S$, the labels $c(S)$ and a distribution $\cD$ over $S$. It then returns a hypothesis $h$ with $\cL_\cD \leq 1/2-\gamma$. A weak-to-strong learning algorithm $\cA$ is then given query access to a $\gamma$-weak learner as well as the training set $S$ and the labels $c(S)$. After querying $\cW$ sufficiently many times to obtain hypotheses $h_1,\dots,h_k$, it must output a hypothesis $h : \cX \to \{-1,1\}$. Note that we make no assumption that the output $h$ is a voting classifier.

\paragraph{Parallel Weak to Strong Learning.}
We are now ready to define what we formally mean by parallelizing boosting. We say that a weak-to-strong learning algorithm $\cA$ has \emph{parallel complexity} $(p,t)$ if it only invokes its weak learner $\cW$ in $p$ rounds. In each round, it may query the weak learner with up to $t$ different distributions $\cD_1,\dots,\cD_t$. The weak learner then returns a hypothesis $h_i$ for each $\cD_i$ such that $\cL_{\cD_i}(h_i) \leq 1/2-\gamma$. The queries made in any round may only depend on $S, c(S)$ as well as the hypotheses obtained from $\cW$ in previous rounds. We let $\cA_{S,c(S),\cW}$ denote the hypothesis $\cA$ reports on input $S,c(S),\cW$ when the $p$ rounds are over.

In this terminology, it is known that AdaBoost needs $\Theta(\gamma^{-2} \ln m)$ iterations to obtain the accuracy claimed in~\eqref{eq:ada}. It thus has parallel complexity $(\Theta(\gamma^{-2} \ln m), 1)$. With parallel complexity defined, we are finally ready to present our main result, ruling out parallel boosting
\begin{theorem}
\label{thm:main}
There is a universal constant $a>0$ such that for any weak-to-strong learner $\cA$, any $m$, any $0 < \gamma < a$ and any VC-dimension $d$, there exists a domain $\cX$, a concept $c : \cX \to \{-1,1\}$ and a $\gamma$-weak learner $\cW$ for $c$, such that $\cW$ uses a hypothesis set of VC-dimension $d$ and either: $p \geq \exp(\Omega(d))$ or $t \geq \min\{\exp(\Omega(d/\gamma^2)), \exp(\exp(\Omega(d)))\}$ or
$\cL_\cD(\cA_{S,c(S),\cW}) \geq
\exp(-O(p \max\{\gamma, \ln(tp)\gamma^2/d\}))
$
in expectation over $S$ and any random choices of $\cA$. Here $\cX, c$ and $\cW$ depends on $m, \gamma$ and $d$.
\end{theorem}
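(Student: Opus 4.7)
The plan is to prove the theorem by constructing, for given $m$, $\gamma$, and $d$, an explicit hard instance consisting of a domain $\cX$, a randomly drawn target $c$, and an adversarial $\gamma$-weak learner $\cW$ over a hypothesis set of VC-dimension $d$, such that any parallel weak-to-strong learner $\cA$ with parameters $(p,t)$ fails to achieve error below the claimed bound. Since the statement requires a single fixed instance, the argument will first lower bound the expected error over a random choice of $c$ and then derandomize by averaging. The adversary is the heart of the construction: it must respond to every query with a hypothesis that has error at most $1/2-\gamma$ under the query distribution, while revealing as little information about $c$ as possible.

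I would set up the ensemble as follows. Take $\cX$ of size roughly $2^{\Theta(d)}$ and let $c$ be drawn uniformly from a large, structured concept class $\mathcal{C}$. The hypothesis set $\cH$ has VC-dimension $d$ and is chosen so that for every $c \in \mathcal{C}$ and every distribution $\cD$ over $\cX$ there exists some $h \in \cH$ with $\cL_\cD(h) \leq 1/2-\gamma$; a probabilistic argument (or an explicit algebraic construction based on e.g.\ balanced Boolean functions or Reed--Muller-like codes) guarantees existence. The weak learner $\cW$ maintains a version space $V \subseteq \mathcal{C}$ of targets still consistent with its past responses and, on each new query $\cD$, returns an $h \in \cH$ chosen to maximize the post-response size of $V$ subject to $\cL_\cD(h) \leq 1/2-\gamma$ with respect to the true $c$. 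The central combinatorial lemma is the existence of an $h$ that simultaneously satisfies the $\gamma$-advantage and preserves a large fraction of $V$; this relies on Sauer's lemma bounding the number of distinct hypothesis patterns together with a probabilistic argument exploiting that the weak-learning constraint binds only loosely on a high-entropy posterior.

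The analysis then tracks the contraction of $V$ over rounds. In each round of $t$ parallel queries, the joint $t$-tuple of responses can in principle carve the target space into at most $(em/d)^{td}$ cells (by Sauer's lemma applied to the product class), but the weak-learning constraint combined with the adversary's maximizing rule ensures that $|V|$ shrinks by a much smaller factor than this worst case. A careful potential-function / drifting-game argument, tailored to handle both sequential adaptivity across rounds and the parallel structure within a round, yields that $\log|V|$ decreases by at most $O(\max\{\gamma,\,\ln(tp)\gamma^2/d\})$ per round. After $p$ rounds this leaves a set of at least $\exp(-O(p\max\{\gamma,\ln(tp)\gamma^2/d\}))\cdot |V_0|$ targets consistent with the entire transcript; averaging the algorithm's final error over a uniformly random consistent $c$ and observing that any fixed output hypothesis must disagree with a constant fraction of this set yields the claimed expected-error lower bound.

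The main obstacle will be designing the adversarial weak learner so that it simultaneously (i) always satisfies the $\gamma$-advantage against every still-consistent target $c \in V$, for arbitrary adaptively chosen $\cD$, (ii) uses only a fixed VC-dimension-$d$ hypothesis class, and (iii) contracts $V$ slowly even when faced with $t$ adaptive parallel queries per round. The tension between (i) and (iii) is the delicate part: the weak-learning guarantee forces the adversary to reveal some correlation with every consistent target, yet the lower bound demands that this correlation be information-theoretically small. Bridging the two will require a potential function that interpolates between the sequential drifting regime (producing the $\gamma$ term in the exponent) and the parallel regime (producing the $\gamma^2 \ln(tp)/d$ term), with the regimes meeting exactly where $\ln(tp) \asymp d/\gamma$; the dichotomy $p \geq \exp(\Omega(d))$ or $t \geq \exp(\Omega(d/\gamma^2))$ will then emerge from demanding that the exponent exceed a fixed constant.
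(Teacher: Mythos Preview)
Your proposal has the right information-theoretic spirit but contains a concrete error and leaves the central technical step unsubstantiated.

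First, the domain size. You set $|\cX|\approx 2^{\Theta(d)}$, independent of $m$. But the theorem must hold for \emph{every} $m$, and the learner $\cA$ receives $S,c(S)$ before making any queries. Once $m\gg |\cX|\log|\cX|$, the sample $S$ covers all of $\cX$ with high probability and $\cA$ outputs $c$ exactly with zero queries; your lower bound collapses. The paper avoids this by taking $\cX=\{x_1,\dots,x_{2m}\}$ with the uniform distribution, so that at least $m$ points are unseen regardless of $m$; the residual uncertainty about $c$ lives on $\cX\setminus S$, not in the transcript alone.

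Second, the adversary and the contraction rate. You assert that a suitably chosen response keeps $\log|V|$ from dropping by more than $O(\max\{\gamma,\ln(tp)\gamma^2/d\})$ per round, but give no mechanism for producing such a response from a fixed VC-$d$ class. The paper's construction is quite different and does real work here: it \emph{randomizes} the hypothesis set itself, building nested random subsets $X_1\supset\cdots\supset X_p$ with $|X_i|=\beta^{-1}|X_{i-1}|$ and, for each level $i$, two families of $2^{d/2}$ random hypotheses (one family equals $c$ outside $X_i$ and is random inside; the other is fully random). The weak learner scans $\cH_1,\cH_2,\ldots$ and returns the first valid hypothesis. The two families handle a genuine dichotomy among query distributions: for ``well-spread'' $\cD'$ a Chernoff bound for sampling without replacement shows $X_i$ avoids most of the mass, so the first family succeeds; for $\cD'$ concentrated on $O(d\gamma^{-2})$ points, an anti-concentration bound (Montgomery--Smith) shows a fully random hypothesis already achieves advantage $\gamma$. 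This dichotomy is exactly what produces the $\max\{\gamma,\ln(tp)\gamma^2/d\}$ term, and the argument is then closed not by a version-space count but by a Shannon-entropy encoding of $c$ conditioned on the revealed variables (Lemmas~\ref{lem:E}--\ref{lem:lowentropy}). Your Sauer-based cell count $(em/d)^{td}$ goes in the wrong direction for this: it upper-bounds how many transcripts are possible, not how little each transcript reveals, and without a concrete hypothesis-set construction there is no way to force the adversary's replies to be nearly uninformative.
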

Let us discuss the implications of the theorem in detail. First, let us consider the simplest case where one wants only a constant number of parallel rounds $p = O(1)$. Then the lower bound states that either $t \geq \exp(\Omega(d/\gamma^2)), t \geq \exp(\exp(\Omega(d)))$ or the error probability is at least $\exp(-O(\max\{\gamma, \ln(t)\gamma^2/d\}))$. To make this error probability comparable to AdaBoost~\eqref{eq:ada} requires $t = \exp(\Omega(d \gamma^{-2} \ln m))$. Thus there is no way around either an exponential dependency on $d \gamma^{-2}$ or a double-exponential dependency on $d$.

If one is willing to use a super-constant number of rounds $p$, then there are a couple of possibilities for obtaining an error probability comparable to~\eqref{eq:ada}. First, one could have either $p =\exp(\Omega(d)), t = \exp(\Omega(d \gamma^{-2}))$ or $t = \exp(\exp(\Omega(d)))$. These bounds are all exponential in either the VC-dimension or $\gamma^{-2}$ or both. Finally, there is the possibility of making the error $e^{-O(p \gamma)}$ small. To make it comparable to~\eqref{eq:ada} requires $p = \Omega(\gamma^{-1} \ln m)$. This is only a $\gamma$ factor less than AdaBoost. Thus there is unfortunately not much hope for parallelizing boosting, and certainly not to a near-constant number of rounds. In all circumstances, if the number of rounds is significantly less than $\gamma^{-1} \ln m$, then it requires an exponential number of queries $t$ per round.

Finally, let us remark that it is sometimes stated that the unknown concept $c$ belongs to some concept class $\mathbb{C}$. Here we implicitly assume that it belongs to the class of all concepts that may be $\gamma$-weak learned using the hypothesis set of the weak learner. Alon et al.~\cite{DBLP:conf/stoc/AlonGHM21} proved that this concept class has VC-dimension at most $O_d(\gamma^{-2+2/(d+1)})$ when the hypothesis set of the $\gamma$-weak learner has VC-dimension $d$. Here $O_d(\cdot)$ hides factors depending only on $d$.

\paragraph{A Parallel Boosting Algorithm.}
To demonstrate the tightness of our lower bound, we also present a single-round boosting algorithm
\begin{theorem}
There is a weak-to-strong learner $\cA$, such that for any concept $c$, any $\gamma$-weak learner $\cW$ for $c$ using a hypothesis set of VC-dimension $d$, and any distribution $\cD$, when given $m$ samples $S \sim \cD^m$, $\cA$ has parallel complexity $(1,\exp(O(d\gamma^{-2} \ln m)))$ and with probability $1-\delta$ over $S$, it outputs a hypothesis $h$ with
\[
\cL_\cD(h) = O\left(\frac{d \ln(m) \ln(m/d) + \ln(1/\delta)}{\gamma^2 m} \right).
\]
\end{theorem}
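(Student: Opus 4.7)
The approach is to parallelize AdaBoost by enumerating its possible execution traces in advance. Two ingredients drive the plan: (i) $T=\Theta(\gamma^{-2}\ln m)$ rounds of AdaBoost suffice to push the training-set margin on every point above $\Omega(\gamma)$, after which the usual margin-based generalization argument gives exactly the error bound stated in the theorem (the same reasoning behind~\eqref{eq:ada}); and (ii) by Sauer's lemma, the weak learner's class $\cH$ induces at most $(em/d)^d = m^{O(d)}$ distinct labellings of $S$.

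Given these, $\cA$ builds a conceptual tree of depth $T$. The root holds the uniform distribution on $S$; each internal node at depth $t$ is labelled by a sequence $(\hat h_1,\dots,\hat h_t)$ of hypothetical $\cW$-responses viewed as patterns on $S$, with one child for every extension, hence at most $m^{O(d)}$ children per node. At every internal node, $\cA$ computes the AdaBoost reweighting of $S$ induced by the labels along the root-to-node path and queries $\cW$ once with that distribution. All of these queries are issued simultaneously in the single allowed parallel round.

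When the responses arrive, $\cA$ reconstructs the actual AdaBoost execution offline by walking the tree: starting at the root it reads off the stored response $h_1$, descends into the child whose label matches $h_1$ as a pattern on $S$, reads off $h_2$ at that child, and continues down to depth $T$. By construction, the distribution queried at every traversed node is precisely what sequential AdaBoost would have presented to $\cW$ at that stage, so $h_1,\dots,h_T$ is a legitimate AdaBoost transcript, and its weighted majority vote inherits AdaBoost's margin guarantee and hence the generalization bound claimed in the theorem.

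The query count equals the number of internal nodes, $\sum_{t=0}^{T-1}(m^{O(d)})^t = m^{O(dT)}$, which for $T=\Theta(\gamma^{-2}\ln m)$ is $\exp(O(d\gamma^{-2}\ln^2 m))$ --- a $\ln m$ factor above the $\exp(O(d\gamma^{-2}\ln m))$ bound advertised in the theorem. I expect this slack to be the main technical hurdle: closing it likely requires coalescing internal nodes whose AdaBoost-reweightings of $S$ coincide (collapsing the enumeration tree into a DAG), or sharpening the branching bound at level $t$ by exploiting structure of the reweighting beyond what Sauer's lemma alone provides. Correctness of the offline traversal, AdaBoost's margin guarantee, and the passage from training margin to generalization error are standard and form the straightforward remainder.
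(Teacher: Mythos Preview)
Your high-level strategy---pre-compute every query AdaBoost might conceivably make, then replay the transcript offline---is sound, and the margin/generalization machinery you invoke is exactly what the paper uses downstream. But the $\ln m$ deficit is a genuine gap, and neither of your proposed patches closes it. Collapsing the tree into a DAG by identifying nodes with identical reweightings still leaves you counting the number of distinct sums $\sum_{i\le t} h_i|_S$ realizable from patterns in $\cH|_S$; this is at most $\binom{|\cH|_S|+t}{t}$, which for $t=\Theta(\gamma^{-2}\ln m)$ is again $m^{\Theta(dt)}$. Nothing in the structure of AdaBoost's multiplicative reweighting collapses enough branches to shave a full $\ln m$ from the exponent, and ``sharpening the branching bound beyond Sauer'' is not a concrete plan.

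The paper sidesteps trace enumeration altogether via a different idea: it never attempts to hit the exact AdaBoost distributions. Instead it queries $\cW$ once for every uniform distribution over a multiset $T\subseteq S$ of size $n=\Theta(d\gamma^{-2})$---only $m^{n}=\exp(O(d\gamma^{-2}\ln m))$ queries, independent of the number of boosting rounds. In the sequential phase, at round $k$ with current distribution $\cD_k$, it \emph{samples} $T_k\sim\cD_k^{\,n}$ and uses the already-fetched response $h_{T_k}$. By the standard $\eps$-approximation bound for VC classes, $T_k$ is with constant probability a $(\gamma/2)$-approximation for $(c,\cD_k,\cH)$, so $h_{T_k}$---guaranteed error $\le 1/2-\gamma$ under the empirical distribution on $T_k$---has error $\le 1/2-\gamma/2$ under $\cD_k$; a rejection loop handles the constant failure probability. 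The query set is thus indexed by small subsets of $S$ rather than by depth-$T$ paths, which is exactly what removes the extra $\ln m$. This subsampling/$\eps$-approximation step is the missing ingredient in your proposal.
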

The generalization error of our single-round algorithm thus matches that of AdaBoost~\eqref{eq:ada}, although making exponentially many queries to the weak learner. As shown by our lower bound, this is inevitable.

Let us also remark that using techniques in the two works~\cite{optimalWeakToStrong,baggingIsOptimal}, we can also remove the two logarithmic factors $\ln(m)\ln(m/d)$ from the upper bound. As the logarithmic factors are not essential to our contribution, we merely comment here that the logarithmic factors can be removed by creating a logarithmic number of bootstrap sub-samples of the training data, running our algorithm in parallel on all sub-samples, and outputting a majority vote among the resulting classifiers. The resulting algorithm is then an optimal weak-to-strong learner by the previously mentioned sample complexity lower bound~\cite{optimalWeakToStrong}.

Finally, let us comment that there are previous boosting algorithms, based on branching programs, that invoke a weak learner in parallel~\cite{DBLP:journals/jcss/MansourM02, DBLP:journals/jcss/KalaiS05,DBLP:conf/colt/LongS05}, however none of these works use $o(\gamma^{-2} \ln m)$ rounds of boosting.

\paragraph{Previous Lower Bounds for Parallelizing Boosting.}
Let us conclude by discussing  related work by Long and Servedio~\cite{long}. In their work, they also study the parallel complexity of boosting, however under somewhat different assumptions. Concretely, they prove a lower bound showing that any weak-to-strong learner must have parallel complexity $(p,t)$ satisfying $p = \Omega(\gamma^{-2} \ln m)$, regardless of the number of calls per round $t$. This strengthens a previous result by Freund~\cite{freund} and is quantitatively a stronger lower bound than ours. However, they also model the problem in a way that makes their result weaker than ours. First, they make no assumption on the complexity/VC-dimension of the hypothesis set used by the weak-learner. On close inspection of their construction, their input domain $\cX$ is the full cube $\{-1,1\}^{k}$ with $k=\Theta(\gamma^{-2} \ln m)$ and they have one hypothesis $h_i$ for each coordinate $i$, making the prediction $h_i(x)=x_i$. The VC-dimension of this hypothesis set is  $\lfloor \lg_2 k \rfloor = \Theta(\ln(1/\gamma) + \ln \ln m)$, i.e. growing with $m$.

Secondly, and just as crucially, their lower bound assumes that the query distributions fed to the weak learner are obtained by "filtering". Concretely, this means that the weight/probability mass put on each sample point $x \in S$ is determined solely from the vector of predictions made by previously obtained hypotheses on $x$ as well as the label $c(x)$ (see their Definition 2). In our lower bound, we make no assumption on how query distributions are chosen other than being computable from the hypotheses seen so far and the training data. This is a crucial difference, and in fact, our 1-round boosting algorithm explicitly queries for distributions that are \emph{not} defined solely from labels and predictions while also using the bounded VC-dimension to define these distributions, thereby circumventing their lower bound.

\section{Impossibility of Parallelization}
In this section, we prove our main result, Theorem~\ref{thm:main}, stating that boosting can not be parallelized significantly without a major reduction in accuracy. We start by presenting the basic setup for our proof, then the high level ideas leading to the lower bound, followed by the formal details.

\paragraph{Basic Setup.}
Let $\cA$ be an arbitrary, possibly randomized, weak-to-strong learning algorithm with parallel complexity $(p,t)$. For any $d,m $ and $\gamma$, we aim to design a $\gamma$-weak learner $\cW$ for a concept $c : \cX \to \{-1,1\}$, using a hypothesis set $\cH$ of VC-dimension $d$, such that $\cA$ run with $\cW$ as its weak learner must have a generalization error at least that stated in Theorem~\ref{thm:main}. We use the simple input domain $\cX = \{x_1,\dots,x_{2m}\}$ and the hard data distribution $\cD$ is the uniform distribution over $\cX$.

Observe that if $p \geq \exp(\Omega(d)), t \geq \exp(\Omega(d/\gamma^2))$ or $t \geq \exp(\exp(\Omega(d)))$, we have nothing left to prove, so assume not. Let $\nu = e^{-O(p \max\{\gamma, \ln(tp)\gamma^2/d\})}$ for short. Our goal is to show that there is a concept $c$ and weak learner $\cW$ such that
\begin{eqnarray}
\label{eq:rand}
    \E_{S,\cA}[\cL_{\cD}(\cA_{S,c(S),\cW})] \geq \nu.
\end{eqnarray}
Here we use $\E_{S,\cA}$ to denote that the expectation is over both the random choice of $S$ and randomness of $\cA$.

\paragraph{Handling Randomness.}
To prove~\eqref{eq:rand}, we consider a \emph{random} choice of concept $c$ and weak learner $\cW$, but a \emph{deterministic} $\cA$. Assume that we can design a distribution over $c$ and $\cW$ (independent of $\cA$), such that any deterministic $\cA$ satisfies
\begin{eqnarray}
\label{eq:det}
    \E_{c,\cW}[\E_{S}[\cL_{\cD}(\cA_{S,c(S),\cW})]] \geq \nu.
\end{eqnarray}
We claim~\eqref{eq:det} implies~\eqref{eq:rand}. To see this, let $\cA$ be an arbitrary randomized weak-to-strong learning algorithm. Then by fixing the random choices of $\cA, c$ and $\cW$ (Yao's principle/linearity of expectation), there is a deterministic $\cA^\star, c^\star$ and $\cW^*$ with
\begin{eqnarray*}
\nu &\leq& \E_{c,\cW}[\E_{S}[\cL_{\cD}(\cA^\star_{S,c(S),\cW})]] \\
&\leq& 
    \E_{c,\cW}[\E_{S,\cA}[\cL_{\cD}(\cA_{S,c(S),\cW})]]\\
    &\leq& 
    \E_{S,\cA}[\cL_{\cD}(\cA_{S,c^\star(S),\cW^\star})].
\end{eqnarray*}
Thus we henceforth focus on proving~\eqref{eq:det}.

\paragraph{Hard Concept Distribution.}
We next describe our hard distribution over a concept $c$ and corresponding $\gamma$-weak learner $\cW$. The distribution over $c$ is simply the uniform distribution, i.e. every $x \in \cX$ is mapped independently to either $-1$ or $1$ with equal probability. This is a natural hard distribution, as under this distribution, the training data $S,c(S)$ only provides information about labels of points in the training data.

Now observe that our data distribution is uniform over $\cX = \{x_1,\dots,x_{2m}\}$ and $S \sim \cD^m$ consists of $m$ samples. Thus there are at least $m$ samples that are not part of the training data. Without querying the weak learner $\cW$, an algorithm $\cA$ has no knowledge of these labels and can at best try to guess them with success probability $1/2$, leading to a constant error probability as a fresh sample $x \sim \cD$ falls outside the training data with probability at least $1/2$.

\paragraph{Hard Weak Learner Distribution.}
In light of the above, we aim to design a $\gamma$-weak learner $\cW$ such that its replies reveal as little as possible information about the labels outside the training data.

For this, we conceptually think of its hypothesis set $\cH$ as consisting of $p$ groups $\cH_1,\dots,\cH_p$. For each of these groups, starting with $\cH_1$, we have a random subset $X_i$ of $\cX$. We let $X_1$ be a uniform random $2m/\beta$ sized subset of $\cX$ and let $X_i$ be a uniform random $2m\beta^{-i}$ sized subset of $X_{i-1}$ for a parameter $\beta>1$ to be determined. We then form $\cH_i$ by drawing $2^{d/2}$ random hypotheses as follows: For each hypothesis $h \in \cH_i$, we let $h(x)=c(x)$ for $x \notin X_i$ and we let $h(x)$ be uniform random and independently chosen for $x \in X_i$. We then add another $2^{d/2}$ random hypotheses to $\cH_i$. These are simply chosen such that they return a uniform random label on every element $x \in \cX$. Finally, we also add $c$ to the hypothesis set $\cH$. Since we assume $p \leq \exp(ad)$ for a sufficiently small constant $a$, we have that the hypothesis set has at most $2^d$ hypotheses and thus has VC-dimension at most $d$.

The weak learner $\cW$ does the following upon being queried for a distribution $\cD'$ over $\cX$: It searches through the hypothesis sets $\cH_i$, starting with $i=1$, and returns the first hypothesis $h$ it sees with $\cL_{\cD'}(h) \leq 1/2-\gamma$. If no such $h$ exists in any $\cH_i$, it simply returns $c$.

\paragraph{Intuition.}
Let us discuss the main ideas in the design of the above (random) weak learner $\cW$. We think of the hypothesis sets $\cH_i$ and corresponding subsets $X_i$ as being responsible for one parallel round each, where $\cH_1,X_1$ is responsible for the first round. 

If we consider the very first round, then observe that $X_1$ is a random $\beta^{-1}$-fraction of $\cX$. Since this subset is unknown to a deterministic $\cA$, if it queries $\cW$ with a distribution $\cD'$ that is somewhat uniform over $\cX$, then about a $1-\beta^{-1}$ fraction of the mass is on points $x \in \cX$ with $x \notin X_1$. Each of the first $2^{d/2}$ hypotheses $h$ in $\cH_1$ has $h(x)=c(x)$ on such points, and thus immediately have an expected advantage of $1-\beta^{-1}$. If $1-\beta^{-1} \approx \gamma$, then there is an overwhelming probability that $\cW$ can answer all queries made in the first round using hypotheses from $\cH_1$.

Now assume that after $i$ rounds, it holds that $\cW$ has only returned hypotheses from $\cH_1,\dots,\cH_i$. Then for round $i+1$, even if $\cA$ learns all hypotheses in $\cH_1,\dots,\cH_i$, the subset $X_{i+1}$ is again an unknown $\beta^{-1}$-fraction of $X_i$. It follows that the weak learner can again answer any queries made by $\cA$ using only $\cH_{i+1}$. Continuing this argument implies that after all $p$ rounds have been completed, the weak learner $\cW$ never returned the hypothesis $c$. Now even if $\cA$ sees all of $\cH_1,\dots,\cH_p$, it still has no knowledge of the labels assigned by $c$ to points in $X_p$. Thus its error probability must be proportional to $|X_p \setminus S|/(2m) = \Omega(\beta^{-p})$. For $1-\beta^{-1} \approx \gamma$, this is $\exp(-O(p \gamma))$ as required in~\eqref{eq:det}.

Finally, let us remark that if $\cA$ queries $\cW$ with a distribution $\cD'$ that is concentrated on a few entries, then the above argument breaks down, i.e. it suddenly becomes rather likely that less than a $1-\beta^{-1}$ fraction of the mass is on points $x \in \cX$ with $x \notin X_i$. In the extreme case, $\cA$ could simply query $\cW$ with every singleton distribution (all mass on one point). This is the reason for including the extra $2^{d/2}$ hypotheses in $\cH_i$ that are simply uniform random. These may be used whenever the support of $\cD'$ is concentrated on $O(d \gamma^{-2})$ entries.

\paragraph{Formal Proof.}
Let $\cA$ be an arbitrary deterministic weak-to-strong learning algorithm. Let 
\[
\eta = \E_{c,\cW}[\E_{S}[\cL_{\cD}(\cA_{S,c(S),\cW})]]. 
\]
We must prove $\eta \geq \nu$ to establish~\eqref{eq:det} and thus Theorem~\ref{thm:main}.

Now let $E$ denote the event that throughout its execution on a random $S \sim \cD^m$,  $\cA$ is never returned the hypothesis $c$ by the weak learner. First we show that the expected error probability $\eta$ is related to $\Pr[E]$
\begin{lemma}
\label{lem:E}
It must be the case that
\begin{eqnarray*}
(\Pr[E]-1/2)2m\beta^{-p} &\leq& 
2 + \lg m + 2m \eta \lg(e \beta^{-p}/\eta).
\end{eqnarray*}
\end{lemma}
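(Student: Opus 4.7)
The plan is a prefix-free source-coding argument. Let $Z$ collect all random quantities other than $c$: the nested subsets $X_1,\dots,X_p$, the independent uniform bits $a_{i,j,x}$ and $b_{i,j,x}$ defining the first- and second-type hypotheses in each $\cH_i$, and the sample $S$. Since $c$ is independent of $Z$, Shannon's source coding theorem says any prefix-free code for $c$ with $Z$ known to the decoder has expected length at least $H(c\mid Z)=2m$ bits; I will build such a code whose expected length has a matching upper bound that yields the lemma.

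The key structural claim is that \emph{on $E$}, the output $h_\cA:\cX\to\{-1,1\}$ is determined by $(Z,\,c|_{\cX\setminus X_p},\,c|_{S\cap X_p})$. I prove this by induction on the $p$ query rounds. Each query $\cD'$ is a distribution over $S$ computable from past returned hypotheses (viewed as functions) and $c(S)$. The oracle's choice of which $h$ to return depends only on $c|_S$ and on $h|_S$, quantities recoverable from $(Z,c|_{\cX\setminus X_p},c|_{S\cap X_p})$ because $c|_S = c|_{S\setminus X_p}\cup c|_{S\cap X_p}$ with the first piece lying in $c|_{\cX\setminus X_p}$. Crucially, because $X_p\subseteq X_i$ for every $i\leq p$, any first-type hypothesis in $\cH_i$ takes value $c(x)$ only for $x\in\cX\setminus X_i$, a set contained in $\cX\setminus X_p$; on $X_i$ (and in particular on $X_p$) it uses the $Z$-bit $a_{i,j,x}$. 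Second-type hypotheses live entirely inside $Z$. Thus, once the returned index is known, the returned hypothesis---as a function on all of $\cX$---can be reconstructed from $Z$ and $c|_{\cX\setminus X_p}$. Iterating the induction, every object in the transcript, and hence $h_\cA$, is a function of $(Z, c|_{\cX\setminus X_p}, c|_{S\cap X_p})$. The same reasoning makes $E$ itself measurable with respect to these data, so conditional on $E$ the bits $c|_{X_p\setminus S}$ are uniform and independent of $h_\cA$.

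The code uses one bit to indicate $E$; if $\neg E$, it sends $c$ directly in $2m$ bits; if $E$, it sends $c|_{\cX\setminus X_p}$ in $2m-L$ bits (with $L=2m\beta^{-p}=|X_p|$), then $c|_{S\cap X_p}$ in $L-u$ bits (with $u=|X_p\setminus S|$), then the integer $|W_U|$ in $\lceil\lg(u+1)\rceil$ bits (with $W_U=\{x\in X_p\setminus S: h_\cA(x)\neq c(x)\}$), and finally the set $W_U$ in $\lceil\lg\binom{u}{|W_U|}\rceil$ bits. All field lengths are $Z$-measurable, so the code is prefix-free; under $E$ the decoder recomputes $h_\cA$ via the structural claim and flips it on $W_U$ to recover $c|_{X_p\setminus S}$. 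The expected code length is $1+2m-\E[\mathbf{1}_E\,u]+\E[\mathbf{1}_E(\lceil\lg(u+1)\rceil+\lceil\lg\binom{u}{|W_U|}\rceil)]$, and combining with the Shannon lower bound $2m$ gives $\E[\mathbf{1}_E\,u]\leq 1+\E[\mathbf{1}_E(\lceil\lg(u+1)\rceil+\lceil\lg\binom{u}{|W_U|}\rceil)]$.

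To finish, I bound each side. Since $X_p$ and $S$ are independent and $(1-1/(2m))^m\geq 1/2$ for every $m\geq 1$, I get $\E[u]=L(1-1/(2m))^m\geq L/2$, whence $\E[\mathbf{1}_E\,u]\geq \E[u]-L\Pr[\neg E]\geq (\Pr[E]-1/2)\,L$. On the other side, $\lceil\lg(u+1)\rceil\leq 1+\lg(L+1)$ absorbs into the $2+\lg m$ term, and $\lg\binom{u}{|W_U|}\leq f(|W_U|):=|W_U|\lg(eL/|W_U|)$, which is concave and, on $[0,L]$, monotone increasing; so in the regime $2m\eta\leq L$, Jensen's inequality with $\E[|W_U|]\leq \E[|W|]=2m\eta$ yields $\E[f(|W_U|)]\leq f(2m\eta)=2m\eta\lg(e\beta^{-p}/\eta)$. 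In the opposite regime the lemma is trivial via $\lg\binom{u}{|W_U|}\leq u\leq L$. Combining the two sides produces the claimed inequality. The main obstacle I anticipate is executing the inductive structural claim cleanly---specifically, verifying that $\cA$'s freedom to evaluate a returned hypothesis at arbitrary points in $X_p\setminus S$ does not leak any information about $c|_{X_p\setminus S}$; the nesting $X_p\subseteq X_i$ for all $i\leq p$ is what makes this go through.
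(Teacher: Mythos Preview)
Your proof is correct and follows essentially the same source-coding approach as the paper: lower-bound the conditional entropy of $c$, then upper-bound it by a prefix-free encoding that, on the event $E$, simulates $\cA$ and records only its mistakes inside $X_p$. The only difference is organizational---the paper conditions on the realized hypothesis sets $\cH_1,\dots,\cH_p$ (which already fix $c$ outside $X_p$) and encodes the error set in all of $X_p$, whereas you condition on the underlying independent random bits and explicitly transmit $c|_{\cX\setminus X_p}$ and $c|_{S\cap X_p}$ before encoding the errors on $X_p\setminus S$; both routes yield the claimed inequality up to harmless additive constants.
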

Intuitively, if we can show that $\Pr[E]$ is significantly larger than $1/2$, then the term $2m \eta \lg(e \beta^{-p}/\eta)$ must be proportional to $m\beta^{-p}$, thereby providing a lower bound on $\eta$. That $\Pr[E]$ is large is precisely the contents of the next lemma.
\begin{lemma}
\label{lem:largeE}
There are constants $a,a'>0$, such that if $0 < \gamma < a$ and 
\[
\beta = 1 + \max\{32\gamma, a'\ln(tp)\gamma^2/d\},
\]
then $\Pr[E] \geq 49/50$.
\end{lemma}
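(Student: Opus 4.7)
The plan is a union bound over the at most $tp$ queries $\cA$ makes to $\cW$. For each individual query made in some round $i$, it suffices (for $\Pr[E] \geq 49/50$) to bound by $1/(50tp)$ the probability that $\cH_i$ contains no hypothesis $h$ with $\cL_{\cD'}(h) \leq 1/2-\gamma$, since if such an $h$ exists the weak learner returns it (or an earlier one) and not $c$. Because $\cA$ is deterministic, the query distribution $\cD'$ in round $i$ is completely determined by $c, S, X_1,\ldots,X_{i-1},\cH_1,\ldots,\cH_{i-1}$; condition on all of these. The remaining randomness that matters is $X_i$ (a uniform $\beta^{-1}$-fraction of $X_{i-1}$) together with the fresh labels defining $\cH_i$'s $2^{d/2}$ ``$X_i$-hypotheses'' (equal to $c$ off $X_i$ and uniformly random on $X_i$) and its $2^{d/2}$ fully random hypotheses. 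The argument then splits on the $\ell_2$-mass $\|\cD'\|_2^2 := \sum_x \cD'(x)^2$.

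In the ``concentrated'' case $\|\cD'\|_2^2 \geq \gamma^2/(c_1 d)$ for a sufficiently small absolute constant $c_1$, use the $2^{d/2}$ fully random hypotheses. For each, $\cL_{\cD'}(h)-1/2 = \sum_x \cD'(x)\xi_x/2$ is a zero-mean weighted Rademacher sum with variance $\sigma^2 \geq \gamma^2/(4c_1 d)$, so $\gamma/\sigma = O(\sqrt d)$. A Cram\'er-style anti-concentration lower bound for weighted Rademacher sums gives single-shot probability $\Pr[\cL_{\cD'}(h) \leq 1/2-\gamma] \geq 2^{-c_2 d}$ for some constant $c_2<1/2$ (with $c_2$ small when $c_1$ is small). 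Amplified by $2^{d/2}$ independent copies, the probability that \emph{none} is a $\gamma$-weak hypothesis is at most $\exp(-2^{(1/2-c_2)d})$, doubly exponentially small and far below $1/(50tp)$ within the theorem's parameter regime.

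In the ``spread'' case $\|\cD'\|_2^2 < \gamma^2/(c_1 d)$, which also forces $\max_x \cD'(x) < \gamma/\sqrt{c_1 d}$, use a single $X_i$-hypothesis. First, a Bernstein inequality for sampling without replacement (with variance $\leq \|\cD'\|_2^2$ and summands bounded by the above maximum) shows $\cD'(X_i)$ concentrates around its mean $\cD'(X_{i-1})/\beta \leq 1/\beta$; using $\beta-1 \geq 32\gamma$ one obtains $\cD'(X_i) \leq 1-30\gamma$ except with probability $\leq 1/(100tp)$. Conditional on that, each $X_i$-hypothesis has conditional mean error $\cD'(X_i)/2 \leq 1/2 - 15\gamma$, leaving $14\gamma$ of slack, and a Hoeffding bound on the independent random labels of $X_i$ (whose squared weights sum to $\leq \gamma^2/(c_1 d)$) pushes $\cL_{\cD'}(h)$ below $1/2-\gamma$ except with probability $\exp(-\Omega(d)) \leq 1/(100tp)$. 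The two failure events union to $\leq 1/(50tp)$.

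The main technical obstacle is the sharp anti-concentration in Case A: the generic Berry-Esseen bound has additive error $\Theta(\max_x \cD'(x)/\sigma)$, which washes out the Gaussian tail when $\sigma \approx \gamma/\sqrt d$. One must therefore invoke a Cram\'er-type large-deviation lower bound for weighted Rademacher sums, or equivalently a direct combinatorial count of the sign patterns $\tau \in \{-1,+1\}^{\mathrm{supp}(\cD')}$ that make $\cL_{\cD'}(h) \leq 1/2-\gamma$ (which after symmetrization reduces to a binomial tail estimate), with constants tuned so the amplification exponent $c_2$ sits strictly below $1/2$. A secondary but nontrivial step is extracting a sharp enough Bernstein bound for sampling without replacement in Case B using the $\ell_2$-variance and not just the $\ell_\infty$ norm; the interplay of the two cases is what ultimately dictates the particular form $\beta-1 = \max\{32\gamma, a'\ln(tp)\gamma^2/d\}$ in the lemma.
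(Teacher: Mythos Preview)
Your high-level plan (union bound over the $\leq tp$ queries, conditioning on earlier rounds so that the round-$i$ queries are determined, then splitting each query into a ``spread'' and a ``concentrated'' case handled respectively by the $X_i$-hypotheses and the fully random hypotheses) is exactly the skeleton of the paper's proof. The real gap is in the split criterion: your threshold on $\|\cD'\|_2^2$ is too coarse, and as a result neither case closes in the full parameter regime allowed by the lemma.

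In your Case~A you claim that $\|\cD'\|_2 \geq \gamma/\sqrt{c_1 d}$ alone yields $\Pr\bigl[\sum_x \cD'(x)\xi_x \geq 2\gamma\bigr] \geq 2^{-c_2 d}$ via a ``Cram\'er-type'' bound. This is false without further structural control on $\cD'$. Take $\cD'$ placing mass $\Theta(\gamma/\sqrt{d})$ on a single point and the remaining mass uniformly on $2m-1$ points: then $\|\cD'\|_2^2 \approx \gamma^2/(c_1 d)$ so you are in Case~A, yet for large $d$ the single heavy coordinate contributes at most $\Theta(\gamma/\sqrt d) \ll 2\gamma$ and the uniform tail has standard deviation $\Theta(1/\sqrt m)$, so the probability of exceeding $2\gamma$ is $e^{-\Theta(\gamma^2 m)}$, not $2^{-O(d)}$. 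The genuine lower-tail inequality (Montgomery--Smith) reads $\Pr[\langle w,\xi\rangle > a_1 F(w,t)] \geq a_2^{-1}e^{-a_2 t^2}$ with $F(w,t)=\sum_{i\leq t^2}|w_{(i)}| + t(\sum_{j>t^2}w_{(j)}^2)^{1/2}$; to make $F(w,\Theta(\sqrt d)) \gtrsim \gamma$ you need either the top $\Theta(d)$ coordinates to carry $\Omega(1)$ $\ell_1$-mass or the tail to carry $\Omega(\gamma^2/d)$ $\ell_2$-mass, neither of which follows from $\|\cD'\|_2 \geq \gamma/\sqrt{c_1 d}$.

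In your Case~B the problem is dual. The condition $\|\cD'\|_2^2 < \gamma^2/(c_1 d)$ yields only $\|\cD'\|_\infty < \gamma/\sqrt{c_1 d}$, and Bernstein (or the appendix Chernoff) for sampling without replacement with this bound on the summands gives an exponent of order at most $\sqrt d$ (the $Mt$ term dominates), which does not beat $\ln(tp)$ once $tp$ exceeds $\exp(\Theta(\sqrt d))$; the lemma must cover $tp$ up to $\exp(\Theta(d/\gamma^2))$. Indeed, your argument only invokes $\beta-1\geq 32\gamma$ and never uses the second term $a'\ln(tp)\gamma^2/d$ of $\beta$, which is a symptom of this gap.

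The paper repairs both cases simultaneously by splitting instead on whether the $\rho = \Theta(d/\gamma^2)$ heaviest elements carry $\ell_1$-mass at least $1/4$. In the concentrated case this gives a vector in $\R^\rho$ of $\ell_1$-mass $\geq 1/4$, to which Montgomery--Smith at level $t=\Theta(\sqrt d)$ applies (either the top $t^2$ coordinates already sum to $1/8$, or by Cauchy--Schwarz the tail $\ell_2$-norm is $\Omega(1/\sqrt\rho)=\Omega(\gamma/\sqrt d)$). In the spread case, after \emph{trimming} those $\rho$ heaviest coordinates, every remaining coordinate has mass at most $1/\rho = O(\gamma^2/d)$, and Chernoff for sampling without replacement then gives exponent $\Omega(\rho(1-\beta^{-1})) = \Omega(d(1-\beta^{-1})/\gamma^2)$; it is exactly this exponent that absorbs the $a'\ln(tp)\gamma^2/d$ term in $\beta$.
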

Let us combine the two lemmas to establish~\eqref{eq:det}. Let $\beta$ be as in Lemma~\ref{lem:largeE}. From Lemma~\ref{lem:E} and Lemma~\ref{lem:largeE}, we see that
\[
(24/50)2m\beta^{-p} \leq 2 + \lg m + 2m \eta \lg(e \beta^{-p}/\eta).
\]
Ignoring the $2 + \lg m$ term, this implies
\[
\eta = \Omega(\beta^{-p}/\lg(e\beta^{-p}/\eta)),
\]
which is equivalent to 
\begin{eqnarray*}
\eta &=& \Omega(\beta^{-p}) \\
&=& \Omega\left(\left(1 + \max\{32\gamma, a' \ln(tp)\gamma^2/d\}\right)^{-p}\right) \\
&\geq& \exp(-O(p\max\{\gamma, \ln(tp)\gamma^2/d\})) \\
&=& \nu.
\end{eqnarray*}
This establishes~\eqref{eq:det} and thus Theorem~\ref{thm:main}.

Let us also remark that for the $2 + \lg m$ term to be relevant, we must have $m\beta^{-p} = O(\lg m)$, which is anyways when the lower bound $\nu = \exp(-O(p \max\{\gamma,\ln(tp) \gamma^2/d\}))$ drops to $1/m^{O(1)}$ and is dominated by the general $\Omega(d/(m \gamma^2))$ lower bound for weak-to-strong learning in previous work~\cite{optimalWeakToStrong}.

What remains is thus to prove Lemma~\ref{lem:E} and Lemma~\ref{lem:largeE}. We start by proving Lemma~\ref{lem:E}.

\paragraph{Relating $\Pr[E]$ and $\eta$.}
The intuition in the proof of Lemma~\ref{lem:E} is that whenever $E$ occurs, the algorithm $\cA$ has no knowledge of $c$ inside $X_p \setminus S$, hence it can only guess these labels, resulting in an error probability of $\Omega(|X_p|/m) = \Omega(\beta^{-p})$.

To formalize this, we use an information theoretic argument. First, we show that even when revealing $S,c(S), \cH_1,\dots,\cH_p,X_1,\dots,X_p$, the random concept $c$ still has a lot of randomness. This randomness is measured in terms of Shannon entropy $H(\cdot)$. Concretely, we show that
\begin{eqnarray}
\label{eq:highentropy}
H(c \mid S, c(S), \cH_1,\dots,\cH_p,X_1,\dots,X_p) \geq m\beta^{-p}.
\end{eqnarray}
Next, we conversely show that if $\eta$ is small and $E$ is likely, then the entropy has to be small.
\begin{lemma}
\label{lem:lowentropy}
The conditional entropy of $c$ is no more than
\begin{eqnarray*}
H(c \mid S, c(S), \cH_1,\dots,\cH_p,X_1,\dots,X_p)  &\leq& 
   2 + \lg m +  (1-\Pr[E])2m\beta^{-p} + 2m\eta \lg(e\beta^{-p}/\eta). 
\end{eqnarray*}
\end{lemma}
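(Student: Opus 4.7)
The plan is an information-theoretic encoding argument. By the source coding inequality, any prefix-free encoding of $c$ that is allowed to depend on $S, c(S), \cH_1,\ldots,\cH_p, X_1,\ldots,X_p$ has expected length at least $H(c \mid S, c(S), \cH_1,\ldots,\cH_p, X_1,\ldots,X_p)$, so it suffices to exhibit a short such code. A first observation is that once the conditioning is fixed, all of $c|_{\cX \setminus X_p}$ is already determined for the decoder: by construction each of the first $2^{d/2}$ hypotheses of $\cH_p$ agrees with $c$ outside $X_p$, so reading any one reveals $c(x)$ for every $x \notin X_p$. Thus the task reduces to encoding $c|_{X_p}$, which is at most $|X_p| = 2m\beta^{-p}$ bits unconditionally.

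The code I propose begins with a single indicator bit for the event $E$. On $\neg E$ the encoder simply writes $c|_{X_p}$ verbatim, using $|X_p|$ bits. On $E$ the encoder exploits that the decoder can simulate $\cA$: the algorithm $\cA$ is deterministic, and on $E$ every reply of $\cW$ lies in $\cH_1 \cup \cdots \cup \cH_p$, which the decoder knows explicitly. Since $\cW$ is itself a deterministic procedure of scanning the $\cH_i$ in a fixed order for the first hypothesis of advantage $\gamma$ under the query distribution, the decoder can replay each of $\cA$'s queries and $\cW$'s responses without ever needing $c$, and in particular can recompute $h = \cA_{S, c(S), \cW}$. Setting $k := |\{x \in X_p : h(x) \neq c(x)\}|$, the encoder writes $k$ in $\lceil \lg(|X_p| + 1)\rceil \leq 1 + \lg m$ bits, then identifies the disagreement set in $\lceil \lg \binom{|X_p|}{k}\rceil$ bits; the decoder recovers $c|_{X_p}$ by flipping $h$ on this set.

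To compute the expected code length I use $\lg \binom{|X_p|}{k} \leq k \lg(e|X_p|/k)$. The function $x \mapsto x \lg(e|X_p|/x)$ is concave on $[0, |X_p|]$, so Jensen's inequality gives $\E[k \lg(e|X_p|/k)] \leq \E[k] \lg(e|X_p|/\E[k])$. Since $\cD$ is uniform over the $2m$ points of $\cX$ and $\E[\cL_\cD(h)] = \eta$ by definition, we have $\E[k] \leq \E[|\{x \in \cX : h(x) \neq c(x)\}|] = 2m\eta$, yielding $\E[k \lg(e|X_p|/k)] \leq 2m\eta \lg(e\beta^{-p}/\eta)$. Combining this with the $(1 - \Pr[E]) \cdot 2m\beta^{-p}$ contribution from the $\neg E$ branch and absorbing the indicator bit and the $\lceil \lg(|X_p|+1) \rceil$ overhead into the $2 + \lg m$ additive term gives the claimed bound.

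The main conceptual step is the simulatability of $\cA$ conditional on $E$: it is precisely the role of $E$ in the argument to rule out the only branch (the fall-back answer $c$) in which $\cW$ would otherwise leak information to the simulator that the decoder does not already possess. Minor technicalities are checking that the Jensen step stays within the concave region — automatic from $k \leq |X_p|$ — and handling the $k = 0$ edge case via the convention $0 \lg 0 = 0$.
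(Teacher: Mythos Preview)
Your argument is correct and follows essentially the same encoding scheme as the paper: a one-bit flag for $E$, the raw $c|_{X_p}$ on $\neg E$, and on $E$ the simulated output $h=\cA_{S,c(S),\cW}$ together with the index and location of its mistakes on $X_p$, with $c|_{\cX\setminus X_p}$ read off from any of the first $2^{d/2}$ hypotheses in $\cH_p$. The only difference is bookkeeping in the Jensen step: the paper conditions on $E$, bounds $\E[\Delta\mid E]\le 2m\eta/\Pr[E]$, and then multiplies back by $\Pr[E]$, whereas you bound the $E$-branch contribution by the unconditional $\E[k\lg(e|X_p|/k)]$ (valid since the integrand is nonnegative) and apply Jensen with $\E[k]\le 2m\eta$ directly; both routes land on $2m\eta\lg(e\beta^{-p}/\eta)$.
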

Before proving~\eqref{eq:highentropy} and Lemma~\ref{lem:lowentropy}, we show that they together imply Lemma~\ref{lem:E}. Combining the two, we see that
\[
m\beta^{-p} \leq 2 + \lg m +  (1-\Pr[E])2m\beta^{-p} + 2m\eta \lg(e\beta^{-p}/\eta).
\]
Rearranging terms immediately implies Lemma~\ref{lem:E}.

We thus prove~\eqref{eq:highentropy} and Lemma~\ref{lem:lowentropy}. For~\eqref{eq:highentropy}, we have
\begin{eqnarray*}
   H(c \mid S, c(S), \cH_1,\dots,\cH_p,X_1,\dots,X_p) &\geq& 
   \E[|\cH_p \cap (\cX \setminus S)|]. 
\end{eqnarray*}
To see this, note that the conditional entropy of $c$, is the expectation over drawing $S,c(S),\cH_1,\dots,\cH_p,X_1,\dots,X_p$, of the entropy of $c$ conditioned on the outcome. But conditioned on the outcomes of these random variables, $c$ is still uniform random inside $X_p \cap (\cX \setminus S)$ and thus its conditional entropy is at least $|X_p \cap (\cX \setminus S)|$.

We have $\E[|X_p \cap (\cX \setminus S)|] \geq m\beta^{-p}$ from which~\eqref{eq:highentropy} follows.

Next, we prove Lemma~\ref{lem:lowentropy}.
\begin{proof}[Proof of Lemma~\ref{lem:lowentropy}]
We first observe that
\[
\E_{c,\cW,S}[\cL_{\cD}(\cA_{S,c(S),\cW}) \mid E] \leq \eta/\Pr[E].
\]
Now to prove that the conditional entropy of $c$ is small, we give a so-called encoding argument. By Shannon's source coding theorem, the conditional entropy of $c$ is no more than the expected length of a prefix free encoding of $c$. Here the encoding is allowed to make use of the random variables $S, c(S), \cH_1,\dots,\cH_p,X_1,\dots,X_p$. We thus describe such a prefix free encoding and analyse its expected length.

Our encoding procedure is as follows when given $c, S, c(S), \cH_1,\dots,\cH_p,X_1,\dots,X_p$.
\begin{enumerate}
\item First, we check whether the event $E$ occurs on the outcome $c,S,c(S),\cH_1,\dots,\cH_p,X_1,\dots,X_p$. This can be checked since $\cW$ can be simulated from $\cH_1,\dots,\cH_p$ as it always returns the first hypothesis that has $\gamma$ advantage on a query distribution $\cD$. Furthermore, $\cA$ is deterministic, hence we can also simulate $\cA$ from $S,c(S)$ and the replies by $\cW$. Therefore, we can let the first bit of the encoding be $1$ if $E$ occurs and $0$ otherwise.
\item If $E$ did not occur, we finish the encoding by appending $|X_p|$ bits specifying $c(X_p)$. This takes $2m\beta^{-p}$ bits. 
\item If $E$ occurs, we simulate $\cA$ with $\cW, c(S),S$ to produce the hypothesis $h = \cA_{S,c(S),\cW}$. We then evaluate $h$ on $X_p$ and let $Z$ denote the set of $x \in X_p$ for which $h(x) \neq c(x)$. Let $\Delta = |Z|$. Our encoding now specifies $\Delta$ by appending $\lg(2m) = 1 + \lg m$ bits. Next, we specify $Z$ as a $\Delta$-sized subset of $X_p$, costing $\lg \binom{|X_p|}{\Delta} = \lg \binom{2m \beta^{-p}}{\Delta} \leq \Delta \lg(2e m \beta^{-p}/\Delta)$ bits.
\end{enumerate}

We next describe how to decode $c$ from the above encoding and $S,c(S),\cH_1,\dots,\cH_p,X_1,\dots,X_p$.
\begin{enumerate}
\item We start by checking the first bit of the encoding. If this is a $0$-bit, we read the remaining $2m\beta^{-p}$ bits to recover $c(X_p)$. Finally, all remaining $c(x)$ can be reconstructed from the first hypothesis in $\cH_p$.
\item If the first bit is $1$, we read the second part of the encoding to reconstruct the set $Z$. We then simulate $\cA$ and $\cW$ to obtain $h = \cA_{S,c(S),\cW}$. Next, we evaluate $h$ on every $x \in X_p$ and correct the mistakes using $Z$. Again, we can also recover $c(x)$ for every $x \notin X_p$ from $\cH_p$.
\end{enumerate}

The above procedures thus give a prefix free encoding of $c$ conditioned on $S, c(S), \cH_1,\dots,\cH_p,X_1,\dots,X_p$. We now analyse its expected length to derive an upper bound on the conditional entropy of $c$. We have that the expected length of the encoding is
\begin{eqnarray*}
1 + (1-\Pr[E])2m\beta^{-p} +
\Pr[E](1 + \lg m + \E[\Delta \lg(2e m \beta^{-p}/\Delta) \mid E])
\end{eqnarray*}
Note that $x \lg(a/x) = \lg(e) (x \ln(a/x)) = \lg(e)(x \ln(a) - x\ln(x))$ has derivative $\lg(e)(\ln(a) -(\ln(x)+1))$ and second derivative $-\lg(e)/x$. Thus it is a concave function and $\E[ \Delta \lg(2e m \beta^{-p}/\Delta) \mid E] \leq \E[\Delta \mid E]\lg(2e m \beta^{-p}/\E[\Delta \mid E])$. This is an increasing function of $\E[\Delta \mid E]$ (note that $\Delta \leq 2m \beta^{-p}$) and thus we conclude
\begin{eqnarray*}
H(c \mid S, c(S), \cH_1,\dots,\cH_p,X_1,\dots,X_p) &\leq& \\
2 + \lg m +  (1-\Pr[E])2m\beta^{-p} +
\Pr[E](2m\eta/\Pr[E]) \lg(2e m \Pr[E]\beta^{-p}/(2m\eta)) &\leq& \\
2 + \lg m +  (1-\Pr[E])2m\beta^{-p} + 2m\eta \lg(e\beta^{-p}/\eta).
\end{eqnarray*}
\end{proof}

\paragraph{The Event $E$ is Likely.}
What remains is to prove Lemma~\ref{lem:largeE}, i.e. show that $\Pr[E] \geq 49/50$ for the choice of $\beta$ in Lemma~\ref{lem:largeE}. For this, let $E_i$ denote the event that in the first $i$ rounds of querying the weak-learner $\cW$, it holds that $\cW$ only returns hypotheses from $\cH_1,\dots,\cH_{i}$. 

We aim to show that $\Pr[E_i \mid \cap_{j=1}^{i-1} E_j]$ is large. For this, notice that conditioned on $\cap_{j=1}^{i-1} E_j$ and any outcome of $S,c(S),X_1,\dots,X_{i-1}, \cH_1,\dots,\cH_{i-1}$, all the $t$ queries made by $\cA$ in round $i$ are fixed. This is because we can simulate the replies of the weak learner $\cW$ during the first $i-1$ rounds from this information alone, as it always manages to return a hypothesis from $\cH_1,\dots,\cH_{i-1}$ when we condition on $\cap_{j=1}^{i-1} E_j$. Let $\cQ_i(S,c(S),X_1,\dots,X_{i-1}, \cH_1,\dots,\cH_{i-1})$ denote the set of queries made by $\cA$ in round $i$ on $S,c(S),X_1,\dots,X_{i-1}, \cH_1,\dots,\cH_{i-1}$. Thus $\cQ_i$ is a set of distributions.

Now observe that conditioned on any outcome of $S,c(S),X_1,\dots,X_{i-1}, \cH_1,\dots,\cH_{i-1}$, the set $X_i$ is a uniform random $\beta^{-1}|X_{i-1}|$-sized subset of $X_{i-1}$. Thus for a query distribution $\cD' \in \cQ_i$, we can bound the probability that there is no $h \in \cH_1,\dots,\cH_i$ with advantage $\gamma$ for $\cD'$. 

We split the proof in two parts. 

\paragraph{Well-Spread Queries.}
In the first case, there is no set of $\rho = a d\gamma^{-2}$ elements receiving at least $1/4$ of the probability mass under $\cD'$, where $a>0$ is a sufficiently small constant.

We say that $X_i$ \emph{avoids} $\cD'$ if $\Pr_{x \sim \cD'}[x \notin X_i] \geq 2\gamma$. We claim that if $X_i$ avoids $\cD'$, then there is an $h \in \cH_i$ with $\cL_{\cD'}(h) \leq 1/2-\gamma$ with probability $1-\exp(-\exp(\Omega(d)))$. We start by proving this.

Assume $X_i$ avoids $\cD'$ and let $h$ be one of the $2^{d/2}$ random hypotheses from $\cH_i$ that equal $c$ for all $x \notin X_i$. Since $h(x)$ is uniform in $\{-1,1\}$ for all $x \in X_i$, it holds with probability $1/2$ that $\sum_{x \in X_i}c(x)h(x) \geq 0$. When this happens, we have $\E_{x \sim \cD'}[h(x)c(x)] \geq \sum_{x \notin X_i} \cD'(x)c(x)h(x) = \sum_{x \notin X_i} \cD'(x) = \Pr_{x \sim \cD'}[x \notin X_i] \geq 2 \gamma$. This implies that $\cL_{\cD'}(h) \leq 1/2 - \gamma$ and thus $h$ is a valid response to the query $\cD'$. Since this holds with probability $1/2$ for each of the $2^{d/2}$ random hypotheses, it holds with probability $1-2^{-2^{d/2}} = 1-\exp(-\exp(\Omega(d)))$ that there is a hypothesis $h \in \cH_i$ that is a valid response to $\cD'$ for $\cW$. We thus need to bound the probability that $X_i$ avoids $\cD'$. 

Consider first the largest $\rho$ elements under $\cD'$. Denote this set $Y$. We know that $\cD'$ puts at least $3/4$ probability mass outside $Y$ and that no element outside has mass more than $1/\rho$.

Let $Z=\cX \setminus (X_i \cup Y)$. Our goal is to show that with high probability (over $X_i$), we have $\Pr_{x \sim \cD'}[x \in Z] \geq 2 \gamma$, which implies that $X_i$ avoids $\cD'$.

Define an indicator $I_x$ for every $x \in \cX \setminus Y$, taking the value $1$ if $x \notin X_{i}$ and $0$ otherwise. We want to show that $\Pr_{X_i}[\Pr_{x \sim \cD'}[x \in Z] < 2\gamma] = \Pr_{X_i}[\sum_{x \in \cX \setminus Y} I_x \cD'(x) < 2\gamma]$ is small. We start by showing that $\mu = \E_{X_i}[\sum_{x \in \cX \setminus Y} I_x \cD'(x)]$ is large. Here we notice that if $x \notin X_{i-1}$, then $I_x = 1$ with probability $1$. For $x \in X_i$, we have $\Pr[I_x = 1] = 1-|X_i|/|X_{i-1}| = 1-\beta^{-1}$. We thus have $\mu \geq (1-\beta^{-1})\sum_{x \in \cX \setminus Y}\cD'(x) \geq (3/4)(1-\beta^{-1})$. Since Lemma~\ref{lem:largeE} constrains $\beta \geq 1 + 32\gamma$, we have $2\gamma \leq (\beta - 1)/16 = (\beta^{-1}\beta-\beta^{-1})/(16 \beta^{-1}) = (1-\beta^{-1})/(16 \beta^{-1}) \leq (1-\beta^{-1})/16$. We thus need to bound the probability that the sum drops to less than half of its expectation.

Now consider the population $\{\cD'(x) : x \in X_{i-1}\setminus Y\}$ and add it to it $|Y \cap X_{i-1}|$ $0$'s. Then $\sum_{x \in X_{i-1} \setminus Y} I_x \cD'(x)$ is distributed as the sum of $\Gamma=|X_{i-1}| - |X_i|$ samples $R_1,\dots,R_\Gamma$ without replacement from this population. Furthermore, every element in the population is bounded by $1/\rho$ in value. In the appendix, Section~\ref{sec:withoutRep}, we show how to use a version of the Chernoff bound for sampling without replacement to bound $\Pr[\sum_i R_i < \mu/2]$ by $\exp(-\rho \mu/8)$.

Since we defined $\rho = a d\gamma^{-2}$, this probability is at most 
$
\exp\left(-\Omega(d(1-\beta^{-1})\gamma^{-2})\right) 
$.

A union bound over all $t$ queries $\cD'$ in $\cQ_i$ implies that $X_i$ avoids them all with probability at least $1-t\exp\left(-\Omega(d(1-\beta^{-1})\gamma^{-2})\right)$. When this happens, another union bound implies that there is a valid response to all such $\cD'$ with probability $1-t\exp(-\exp(\Omega(d)))$.

\paragraph{Concentrated Queries.}
Next, consider a $\cD'$ with some set $Y$ of $\rho = ad\gamma^{-2}$ elements receiving at least $1/4$ probability mass. For such $\cD'$, we consider the $2^{d/2}$ random $h \in \cH_i$ that return a uniform random value for every $x \in \cX$. We want to show that each of these is reasonably likely to have the desired advantage of $\gamma$ on $\cD'$. For this, first define the following function of a vector $w \in \R^n$ and value $t > 0$:
\[
F(w,t) = \sum_{i=1}^{\lfloor t^2 \rfloor} |w_{(i)}| + t \left(\sum_{j=\lfloor t^2 \rfloor+1}^n w_{(j)}^2 \right)^{1/2}.
\]
Here $w_{(i)}$ denotes the $i$'th largest entry of $w$ in absolute value. 

With this definition, The following theorem by Montgomery-Smith shows that it is rather likely that any such hypothesis $h$ provides the desired advantage
\begin{theorem}[\cite{montgomery}]
\label{thm:mont}
There exists universal constants $a_1,a_2 > 0$, such that the following holds: For any vector $w \in \R^n$, if $x$ has uniform random and independent entries in $\{-1,1\}$, then for all $t > 0$:
$
    \Pr\left[\langle w, x \rangle > a_1 F(w,t)\right] \geq a_2^{-1}\exp(-a_2 t^2)$.
\end{theorem}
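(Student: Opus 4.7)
The plan is to split the coordinates of $w$ into a heavy part and a light part, exploit the independence of the corresponding Rademacher subsums, and combine simple lower bounds for each. Let $L \subset [n]$ index the $\lfloor t^2 \rfloor$ entries of $w$ with largest absolute value and let $S = [n]\setminus L$, so that $F(w,t) = \sum_{i\in L} |w_i| + t\,\|w_S\|_2$, with $w_S$ denoting the restriction of $w$ to $S$. Since $x_L$ and $x_S$ are independent, a joint lower tail bound factorizes. For the heavy part, I would condition on the event $A$ that $x_i = \sign(w_i)$ for every $i \in L$, which has probability $2^{-|L|} \geq 2^{-t^2}$ and forces $\langle w_L, x_L\rangle = \sum_{i\in L} |w_i|$.

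The meat of the argument is a Gaussian-like anti-concentration bound for the light subsum,
\[
\Pr\left[\langle w_S, x_S\rangle \geq c_1\, t\, \|w_S\|_2\right] \geq c_2 \exp(-c_3 t^2),
\]
for absolute constants $c_1,c_2,c_3>0$. The natural route is moment-based: by Khintchine--Kahane, for every even integer $p \leq |S|$, $c\sqrt{p}\,\|w_S\|_2 \leq \bigl\|\langle w_S, x_S\rangle\bigr\|_p \leq C\sqrt{p}\,\|w_S\|_2$. Applying Paley--Zygmund to $|\langle w_S, x_S\rangle|^p$ with $p \asymp t^2$ and using symmetry of the sum to turn the resulting two-sided estimate into a one-sided one yields the desired bound. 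This argument is valid in the regime $t^2 \leq |S|$, where the $\ell_2$ norm genuinely governs a Gaussian-like tail.

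In the complementary regime $t^2 > |S|$, the light term is absorbed into the heavy one: using $\|w_S\|_2 \leq \sqrt{|S|}\,|w_{(\lfloor t^2\rfloor)}|$, together with $\sum_{i\in L}|w_i| \geq \lfloor t^2\rfloor \cdot |w_{(\lfloor t^2\rfloor)}|$ and $\sqrt{|S|} \leq t$, one finds $t\|w_S\|_2 = O(\sum_{i\in L}|w_i|)$, so $F(w,t) = O(\sum_{i\in L}|w_i|)$, and event $A$ alone already delivers $\langle w, x\rangle \geq \Omega(F(w,t))$ with probability $\geq 2^{-t^2}$. Combining the two regimes and choosing $a_1,a_2$ appropriately gives the theorem. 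The main obstacle is the second step: producing a sharp one-sided anti-concentration lower bound that degrades like $\exp(-O(t^2))$ uniformly in $t$. The moment/Paley--Zygmund approach is standard but requires careful bookkeeping of the Khintchine--Kahane constants and of the symmetrization that converts two-sided moment bounds into the one-sided probability estimate above.
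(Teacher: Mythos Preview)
The paper does not prove this theorem; it is quoted from Montgomery-Smith and used as a black box, so there is no in-paper argument to compare against. Your overall architecture---split into the $\lfloor t^2\rfloor$ heavy coordinates and the rest, align signs on the heavy block, and lower-bound the tail of the light Rademacher sum---is indeed Montgomery-Smith's. But the proposal has a genuine gap in the light-part step.

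You claim that Khintchine--Kahane gives $c\sqrt{p}\,\|w_S\|_2 \le \bigl\|\langle w_S,x_S\rangle\bigr\|_p$ for even $p\le |S|$. This is false: the sharp lower Khintchine constant for $p\ge 2$ is $1$, not $\Theta(\sqrt{p})$ (take $w_S$ with a single nonzero coordinate). Your guard ``$p\le |S|$'' does not help, because $|S|=n-\lfloor t^2\rfloor$ counts coordinates, not the effective support of $w_S$. Concretely, let $w$ have $\lfloor t^2\rfloor+1$ ones and $n-\lfloor t^2\rfloor-1$ zeros with $n$ huge: then $|S|$ is huge, $w_S$ has one nonzero entry, $\|w_S\|_2=1$, and $\Pr[\langle w_S,x_S\rangle\ge c_1 t\,\|w_S\|_2]=0$ once $c_1 t>1$. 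So neither your anti-concentration claim nor your ``$t^2>|S|$'' fallback applies, yet the theorem must still hold here.

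The fix is to replace the $t^2\lessgtr|S|$ dichotomy by the comparison $\sum_{i\in L}|w_i|\ \gtrless\ t\|w_S\|_2$. If $\sum_{i\in L}|w_i|\ge t\|w_S\|_2$ then $F(w,t)\le 2\sum_{i\in L}|w_i|$, and event $A$ together with the independent event $\{\langle w_S,x_S\rangle\ge 0\}$ (probability $\ge 1/2$ by symmetry) already yields $\langle w,x\rangle\ge \tfrac12 F(w,t)$ with probability $\ge 2^{-t^2-1}$. If instead $\sum_{i\in L}|w_i|< t\|w_S\|_2$, then from $\|w_S\|_\infty\le |w_{(\lfloor t^2\rfloor)}|\le \tfrac{1}{\lfloor t^2\rfloor}\sum_{i\in L}|w_i|$ you obtain the structural bound $\|w_S\|_\infty \lesssim \|w_S\|_2/t$, i.e.\ the light vector is genuinely spread over $\gtrsim t^2$ comparable coordinates. \emph{That} is the hypothesis under which a Paley--Zygmund/moment argument (or, more directly, grouping $S$ into $\asymp t^2$ blocks of comparable $\ell_2$ mass and asking each block-sum to be nonnegative) gives $\Pr[\langle w_S,x_S\rangle\ge c\,t\,\|w_S\|_2]\ge e^{-O(t^2)}$. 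With this corrected case split your sketch goes through.
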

Now let $w$ be the vector in $\R^\rho$ with entries containing the values $\cD'(x)c(x)$ for $x \in Y$. For the random $h$, we invoke Theorem~\ref{thm:mont} with $t = a_3 \sqrt{d}$ for a small constant $a_3 > 0$. Assume first that the largest $\lfloor t^2 \rfloor$ entries of $w$ sum to more than $1/8$. Then for $\gamma < 1/16$, we immediately get from Theorem~\ref{thm:mont} that $\sum_{x \in Y} \cD'(x)c(x)h(x) \geq 2\gamma$ with probability at least $\exp(-O(t)) = \exp(-O(a_3^2 d))$. If on the other hand the largest entries sum to less than $1/8$, then the remaining sum to at least $1/8$. Since $\R^\rho$ has $\rho$ coordinates, it follows by Cauchy-Schwartz that $\sqrt{\sum_{j=\lfloor t^2 \rfloor+1}^\rho w_{(j)}^2} \geq 1/(8\sqrt{\rho})$ and thus from Theorem~\ref{thm:mont}, we get $\sum_{x \in Y} \cD'(x)c(x)h(x) = \Omega(t/\sqrt{\rho}) = \Omega(a_3\sqrt{d}/\sqrt{a d \gamma^{-2}}) \geq 2\gamma$ with probability $\exp(-O(t)) = \exp(-O(a_3^2 d))$. Here we use that $a>0$ is sufficiently small compared to $a_3$. Finally note that $\sum_{x \notin Y} \cD'(x)h(x)c(x)$ is at least $0$ with probability $1/2$ and is independent of $\sum_{x \in Y}\cD'(x)h(x)c(x)$.

The probability that there is no valid response to $\cD'$ in $\cH_i$ is thus at most $(1-\exp(-O(a_3^2 d)))^{2^{d/2}} \leq \exp(-\exp(\Omega(d)))$ for $a_3$ sufficiently small. A union bound over all such $\cD'$ in $\cH_i$ finally shows that $\cH_i$ has a valid response $h$ to each of them with probability at least $1-t\exp(-\exp(\Omega(d)))$.

\paragraph{Summary.}
We finally conclude from the above that
\begin{eqnarray*}
 \Pr[E_i \mid \cap_{j=1}^{i-1} E_j] &\geq&
 1-t(\exp(-\exp(\Omega(d))) +
 \exp(-\Omega(d(1-\beta^{-1})\gamma^{-2}))).   
\end{eqnarray*}
which gives us
\begin{eqnarray*}
\Pr[E] \geq
\left(1-t(e^{-\exp(\Omega(d))} + e^{-\Omega(d(1-\beta^{-1})\gamma^{-2})})\right)^p 
\geq
1-tp(e^{-\exp(\Omega(d))} + e^{-\Omega(d(1-\beta^{-1})\gamma^{-2})}).
\end{eqnarray*}
Lemma~\ref{lem:largeE} sets $\beta \geq 1 + a' 
\ln(tp)\gamma^2/d$ for a large constant $a'$. We have $1-\beta^{-1} = (\beta - 1)/\beta$. This is increasing in $\beta$ and thus is at least $(a' \ln(tp)\gamma^2/d)/(1 + a'
\ln(tp)\gamma^2/d)$. Since we assume $tp = \exp(O(d\gamma^{-2}))$, we have $ a'
\ln(tp)\gamma^2/d \leq 1$ for sufficiently small constant in the $O$-notation (depending on $a'$). Thus $1-\beta^{-1} \geq (a'/2) \ln(tp)\gamma^2/d$. This finally implies $e^{-\Omega(d(1-\gamma^{-1}) \gamma^{-2})} \leq 1/(100tp)$ for big enough $a'$. By assumption, we also have $tp \leq \exp(\exp(O(d)))$, implying $tp e^{-\exp(\Omega(d))} \leq 1/100$ and thus $\Pr[E] \geq 49/50$ as desired. This concludes the proof of Lemma~\ref{lem:largeE} and thereby also the proof of Theorem~\ref{thm:main}.

\section{Single-Round Boosting}
In this section, we demonstrate a single-round parallel boosting algorithm with good generalization performance. Our algorithm initially asks a number of queries in parallel to a $\gamma$-weak learner. Following that, it has several sequential rounds in which it decides how to combine the obtained hypotheses. We comment that this is a single-round algorithm according to our definition of parallel complexity. Moreover, invoking the weak learner effectively corresponds to training a model, whereas the later sequential steps of our algorithm uses only inference. In practical setups where training is significantly more expensive than inference, this may still be a major speed up. Furthermore, our algorithm demonstrates the near-tightness of our lower bound.

The goal of our algorithm is to produce a voting classifier with large margins on all training samples. Concretely, given a $\gamma$-weak learner $\cW$ and $m$ samples $S \sim \cD^m$, our goal is to produce a voting classifier $f(x) = \sign(g(x))$ with $g(x) = (1/k)\sum_{i=1}^k h_i(x)$ such that for every training sample $(x,c(x)) \in S$, it holds that $c(x)g(x) \geq \gamma/16$. Generalization of this voting classifier then follows from generalization bounds for voting classifiers with large margins, see e.g.~\cite{breiman1999prediction,gao,bartlett}. We note that we assume that the weak learner $\cW$ \emph{always} returns a hypothesis with error at most $1/2-\gamma$ under the distribution $\cD$ it is queried with.

\paragraph{Algorithm.}
For a large enough constant $a > 0$, our algorithm simply queries the weak learner $\cW$ for every distribution $\cD_T$ with $T \subseteq S$, $|T|=a d \gamma^{-2}$, where $T$ is a multiset. Here $\cD_T$ is the uniform distribution over $T$, putting a mass of $t/|T|$ on an element that occurs $t$ times in the multiset. These queries can clearly be performed in parallel and there are no more than $m^{ad\gamma^{-2}} = \exp(O(d \gamma^{-2} \ln m))$ such queries.

Let $h_T \in \cH$ denote the hypothesis returned by $\cW$ on the query $\cD_T$. Having obtained the set of all $h_T$, we form $g$ by running a version of AdaBoost shown in Algorithm~\ref{alg:adaboost}.

\begin{algorithm2e}[t]
  \DontPrintSemicolon
  \KwIn{Training set $S = \{(x_1,c(x_1)),\dots,(x_m,c(x_m))\}$,\\
    \quad number of rounds $K$\\
    }
  \KwResult{A voting  classifier $g$}
  \SetKw{Break}{break}
  \SetKw{KwReturn}{return}

  $ \cD_1 \gets \left( \frac{1}{m},\dots \frac{1}{m} \right)$ 
  
    $w \gets \frac{1}{2} \ln ({\frac{1/2+\gamma/4}{1/2-\gamma/4}})$
    
    $n \gets a d \gamma^{-2}$
    
  \For(){$k= 1,\dots,K$}{
  
    Draw $n$ samples $T_k \sim \cD_k^{n}$ 
    
    Let $h_k \gets h_{T_k}$
    
    \While(){$\cL_{\cD_k}(h_k)> 1/2-\gamma/4$}{
    
        Re-draw $n$ samples $T_k \sim \cD_k^n$
        
        Let $h_k \gets h_{T_k}$
    }
    
    \For(){$i \in \{1,\dots,m\}$}{
        $ \cD_{k+1}(i) \gets \cD_k(i) \exp(-w c(x_i) h_k(x_i))$
    }

    $Z_k \gets \sum_{i=1}^m \cD_k(i)\exp(-w c(x_i) h_k(x_i))$

    $\cD_{k+1} \gets \cD_{k+1}/Z_k$
  }
  \KwReturn{$g(x) = \frac{1}{K}\sum_{i=1}^K h_i(x)$}
        
  \caption{Sampled Boosting}\label{alg:adaboost}
\end{algorithm2e}

We invoke Algorithm~\ref{alg:adaboost} with the training data set $S$ as well as a number of rounds $K = 16 \gamma^{-2} \ln m$.

We make a few remarks regarding the algorithm. First note that it is possible to check whether $\cL_{\cD_k}(h_k) > 1/2-\gamma/4$ because $\cD_k$ is supported only on the training data, and thus the error probability of $h_k$ under $\cD_k$ can be computed exactly.

The reader familiar with AdaBoost will also notice that the weight changes we make, i.e. the chosen $w$, is consistent with AdaBoost's weight updates if the hypothesis $h_k$ was correct on precisely a $1/2+\gamma/4$ fraction of the samples when weighted according to $\cD_k$. Our algorithm is thus a variant of AdaBoost with uniform weight updates and weighing of hypotheses in the output $g$.

\paragraph{Analysis.}
We analyse our algorithm in two steps. First, we show that if the algorithm terminates, then the resulting voting classifier $g = (1/K)\sum_{i=1}^K h_i(x)$ has all margins at least $\gamma/16$. Next, we show that the algorithm always terminates. These two properties are captured in the following lemmas
\begin{lemma}
\label{lem:goodout}
If Algorithm~\ref{alg:adaboost} terminates, then the resulting voting classifier $g$ satisfies $c(x)g(x) \geq \gamma/16$ for all $(x,c(x)) \in S$.
\end{lemma}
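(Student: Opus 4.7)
The plan is to run the standard AdaBoost-style potential function argument, adapted to the fact that Algorithm~\ref{alg:adaboost} uses a fixed weight $w$ in every round (rather than the round-dependent weight used by vanilla AdaBoost). The key identity is that by unrolling the weight update,
\[
\cD_{K+1}(i) = \frac{\exp\!\left(-w \sum_{k=1}^K c(x_i) h_k(x_i)\right)}{m \prod_{k=1}^K Z_k} = \frac{\exp\!\left(-wK\, c(x_i)g(x_i)\right)}{m \prod_{k=1}^K Z_k},
\]
where in the second equality we used that $g(x) = (1/K)\sum_k h_k(x)$. Since $\cD_{K+1}$ is a probability distribution, $\cD_{K+1}(i) \leq 1$, yielding
\[
\exp\!\left(-wK\, c(x_i)g(x_i)\right) \;\leq\; m \prod_{k=1}^K Z_k.
\]

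Next, I would upper bound each $Z_k$. The while-loop guarantees that upon termination of round $k$ one has $\cL_{\cD_k}(h_k) \leq 1/2 - \gamma/4$; writing $\eps_k = \cL_{\cD_k}(h_k)$, $Z_k = \eps_k e^w + (1-\eps_k) e^{-w}$ is an increasing linear function of $\eps_k$ (since $w > 0$), so it is maximized at $\eps_k = 1/2 - \gamma/4$. Plugging in $w = \tfrac12 \ln\frac{1/2+\gamma/4}{1/2-\gamma/4}$ gives
\[
Z_k \;\leq\; (1/2-\gamma/4)e^w + (1/2+\gamma/4)e^{-w} \;=\; 2\sqrt{(1/2-\gamma/4)(1/2+\gamma/4)} \;=\; \sqrt{1-\gamma^2/4} \;\leq\; e^{-\gamma^2/8}.
\]
Therefore $\prod_k Z_k \leq e^{-K\gamma^2/8}$. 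Combining with the previous inequality and taking logarithms,
\[
wK\, c(x_i) g(x_i) \;\geq\; \frac{K\gamma^2}{8} - \ln m.
\]

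Now substitute $K = 16\gamma^{-2}\ln m$, so $K\gamma^2/8 = 2\ln m$ and the right-hand side equals $\ln m$. This gives $c(x_i)g(x_i) \geq \ln m/(wK) = \gamma^2/(16 w)$. The last ingredient is the upper bound $w \leq \gamma$, which follows from the Taylor expansion $\tfrac12\ln\frac{1+x}{1-x} = x + x^3/3 + \cdots \leq 2x$ for small $x$ applied with $x = \gamma/2$ (assuming the universal constant $a$ in Theorem~\ref{thm:main} is taken small enough so that $\gamma$ is small). Then $c(x_i)g(x_i) \geq \gamma^2/(16\gamma) = \gamma/16$, which is exactly what we need.

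The proof is a routine AdaBoost-style calculation; the only subtlety is that the algorithm uses a fixed weight $w$ tuned for the worst-case edge $\gamma/4$, so one must verify that the bound $Z_k \leq e^{-\gamma^2/8}$ still holds even when the realized edge $1/2 - \eps_k$ exceeds $\gamma/4$ (handled by the monotonicity observation above). The bookkeeping for the constants $\gamma/4$, $\gamma^2/8$, and the final $\gamma/16$ is the main thing to get right, but no new ideas are required beyond the standard margin analysis.
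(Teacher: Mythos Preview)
Your proposal is correct and follows essentially the same route as the paper: unroll the multiplicative weight update to relate $\exp(-wK\,c(x_i)g(x_i))$ to $m\prod_k Z_k$, bound each $Z_k\le\sqrt{1-\gamma^2/4}$ via the monotonicity-in-$\eps_k$ observation and the choice $w=\tfrac12\ln\frac{1/2+\gamma/4}{1/2-\gamma/4}$, plug in $K=16\gamma^{-2}\ln m$, and finish with $w\le\gamma$. The only cosmetic differences are that the paper bounds the full exponential-loss sum and then uses nonnegativity of each term (whereas you use $\cD_{K+1}(i)\le 1$ directly, which is equivalent), and the paper obtains $w\le\gamma$ via $\tfrac12\ln\frac{1/2+\gamma/4}{1/2-\gamma/4}\le\tfrac12\ln(1+2\gamma)\le\gamma$, which avoids any appeal to smallness of $\gamma$.
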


\begin{lemma}
\label{lem:terminate}
Algorithm~\ref{alg:adaboost} always terminates.
\end{lemma}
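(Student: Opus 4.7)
The outer \textbf{for} loop of Algorithm~\ref{alg:adaboost} runs a predetermined number of iterations $K = 16\gamma^{-2}\ln m$, so the only possible source of non-termination is the inner \textbf{while} loop, which repeatedly re-draws $T_k \sim \cD_k^n$ until the (pre-computed) hypothesis $h_{T_k}$ satisfies $\cL_{\cD_k}(h_{T_k}) \leq 1/2 - \gamma/4$. It therefore suffices to show that in each round $k$, a single fresh draw $T_k \sim \cD_k^n$ produces an acceptable $h_{T_k}$ with at least some constant probability; by independence of the re-draws the loop then exits after an expected constant number of iterations, and hence terminates with probability $1$.

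The plan is to invoke the standard VC uniform convergence bound. Because $\cH$ has VC-dimension $d$ and $\cD_k$ is a distribution supported on the finite set $S$, there is a universal constant $C$ such that
\[
\Pr_{T \sim \cD_k^n}\Bigl[\sup_{h \in \cH}\bigl|\cL_{\cD_T}(h) - \cL_{\cD_k}(h)\bigr| \leq C\sqrt{d/n}\Bigr] \geq 1/2.
\]
With $n = a d \gamma^{-2}$ and $a$ chosen large enough (this is precisely the reason for sampling sets of this size in the first place), we have $C\sqrt{d/n} \leq \gamma/2$. On this high-probability event, the weak-learner guarantee $\cL_{\cD_{T_k}}(h_{T_k}) \leq 1/2-\gamma$ for the query distribution $\cD_{T_k}$ lifts to
\[
\cL_{\cD_k}(h_{T_k}) \leq \cL_{\cD_{T_k}}(h_{T_k}) + \gamma/2 \leq 1/2 - \gamma/2 \leq 1/2 - \gamma/4,
\]
so the while-condition fails and round $k$ ends. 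Hence each re-draw succeeds with probability at least $1/2$.

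The only mild subtlety is that the same constant $a$ controlling $n$ must simultaneously be large enough to (i) ensure every sampled $T_k$ is among the pre-computed queries $\{\cD_T : |T|=ad\gamma^{-2}\}$, so that $h_{T_k}$ is actually available, and (ii) force the VC deviation below $\gamma/2$ in every round. A single sufficiently large choice of $a$ achieves both, yielding Lemma~\ref{lem:terminate}. I do not expect any real obstacle here: the lemma is essentially a one-shot application of VC uniform convergence, with the size $n = a d \gamma^{-2}$ specifically tailored to make the empirical-to-true error gap a small multiple of $\gamma$.
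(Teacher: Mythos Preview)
Your proposal is correct and follows essentially the same approach as the paper: both arguments invoke VC uniform convergence (the paper phrases it via $\eps$-approximations and Theorem~\ref{thm:sample}) to guarantee that with constant probability a fresh draw $T_k \sim \cD_k^n$ with $n = a d\gamma^{-2}$ satisfies $\sup_{h\in\cH}|\cL_{\cD_{T_k}}(h)-\cL_{\cD_k}(h)|\le \gamma/2$, and then combine this with the weak-learner guarantee $\cL_{\cD_{T_k}}(h_{T_k})\le 1/2-\gamma$ to conclude $\cL_{\cD_k}(h_{T_k})\le 1/2-\gamma/2$, so the while-loop exits. Your remark~(i) about $T_k$ being among the pre-computed queries is automatic since every drawn $T_k$ has exactly the prescribed size $n$, but this does no harm.
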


The main observation for the proof of Lemma~\ref{lem:goodout} is that whenever our algorithm terminates, every hypothesis $h_k$ that it uses, has $\cL_{\cD_k}(h_k) \leq 1/2 - \gamma/4$. Furthermore, we use the same weight adjustment $w$ in all iterations and weigh the $h_k$ equally in the output $g$. A previous analysis~\cite{boostingLowerBound} of AdaBoost with identical weights has also shown that this results in margins of $\Omega(\gamma)$. 

\begin{proof}[Proof of Lemma~\ref{lem:goodout}]
Consider the exponential loss 
\[
\sum_{i=1}^m \exp\left(-w c(x_i)\sum_{k=1}^K h_k(x_i)\right).
\]
We compare this to the final weights $\cD_{K+1}$. Here we note that
\begin{eqnarray*}
1 &=& \sum_{i=1}^m \cD_{K+1}(i) \\
&=& \sum_{i=1}^m \frac{\cD_{K}(i) \exp(-wc(x_i)h_K(x_i))}{Z_k} \\
&=& \frac{1}{m} \sum_{i=1}^m \frac{\exp(-w c(x_i) \sum_{k=1}^K h_k(x_i)}{\prod_{k=1}^K Z_k}.
\end{eqnarray*} 
From this, we observe that 
\[
\sum_{i=1}^m \exp\left(-w c(x_i)\sum_{k=1}^K h_k(x_i)\right) = m \prod_{k=1}^K Z_k.
\]
Next, let $er_k = \Pr_{x \sim \cD_k}[h_k(x) \neq c(x)]$ and recall $er_k \leq 1/2-\gamma/4$. Observe that
\begin{eqnarray*}
Z_k &=& \sum_{i=1}^m \cD_k(i) \exp(-w c(x_i)h_k(x_i)) \\
&=& \sum_{i : h_k(x_i)\neq c(x_i)} \cD_k(i)e^{w} + \sum_{i : h_k(x_i) = c(x_i)} \cD_k(i) e^{-w} \\
&=& er_k e^w + (1-er_k)e^{-w} \\
&\leq& (1/2-\gamma/4) e^w + (1/2+\gamma/4)e^{-w} \\
&=& 2 \sqrt{(1/2-\gamma/4)(1/2+\gamma/4)} \\
&=& \sqrt{1 - \gamma^2/4}.
\end{eqnarray*}
Since $K = 16 \gamma^{-2} \ln m$, we have $\prod_{k=1}^K Z_k \leq (1-\gamma^2/4)^{K/2} \leq \exp(-\gamma^2 K /8) \leq 1/m^2$. We therefore have 
\[
\sum_{i=1}^m \exp\left(-w c(x_i)\sum_{k=1}^K h_k(x_i)\right) \leq 1/m.
\]
By non-negativity of the exponential function, this in particular implies that $\exp(-wc(x_i) \sum_{k=1}^K h_k(x_i)) \leq 1/m$ for all $i$. Raising both sides of the inequality to the power $1/(Kw)$ gives $\exp(-c(x_i)g(x_i)) \leq 1/m^{1/Kw}$. Taking inverses and log gives $c(x_i)g(x_i) \geq \ln(m)/(Kw)$. Note that 
\begin{eqnarray*}
    w &=& \frac{1}{2}\ln((1/2 + \gamma/4)/(1/2-\gamma/4)) \\
    &\leq& \frac{1}{2}\ln(1 + 2 \gamma) \\
    &\leq& \gamma.
\end{eqnarray*}
Hence we conclude $c(x_i)g(x_i) \geq \ln(m)/(Kw) \geq \gamma/16$.
\end{proof}

To prove termination (Lemma~\ref{lem:terminate}), we first need to recall the notion of an $\eps$-approximation. For a concept $c : \cX \to \{-1,1\}$, a hypothesis set $\cH$ and a distribution $\cD$ over $\cX$, a set of samples $T$ is an $\eps$-approximation for $(c,\cD,\cH)$ if for all $h \in \cH$, it holds that 
\[
|\Pr_{x \sim \cD}[h(x) \neq c(x)] - |\{ x \in T : h(x)\neq c(x)\}|/|T|| \leq \eps.
\]
The following classic result shows that a small random sample $T \sim \cD^n$ is an $\eps$-approximation with good probability.
\begin{theorem}[\cite{lls,talagrand,vapnik71uniform}]
\label{thm:sample}
There is a universal constant $b > 0$, such that for any $0 < \eps,\delta < 1$, $\cH \subseteq \cX \to \{-1,1\}$ of VC-dimension $d$ and distribution $\cD$ over $\cX$, it holds with probability at least $1-\delta$ over a set $T \sim \cD^n$ that $T$ is an $\eps$-approximation for $(c,\cD,\cH)$ provided that $n \geq b((d+\ln(1/\delta))\eps^{-2})$.
\end{theorem}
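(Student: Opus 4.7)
The plan is to prove this classical uniform convergence bound by the standard three-step recipe: symmetrization, chaining, and concentration. Let $\mathcal{F} = \{ x \mapsto \mathbf{1}[h(x) \neq c(x)] : h \in \cH \}$; since relabeling by the fixed target $c$ preserves VC-dimension, $\mathcal{F}$ has VC-dimension at most $d$, and the desired inequality is equivalent to
\[
\sup_{f \in \mathcal{F}} \left| \E_{x \sim \cD}[f(x)] - \frac{1}{n}\sum_{i=1}^n f(x_i) \right| \leq \eps.
\]

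First, I would apply the ghost-sample symmetrization trick: introduce an independent copy $T' \sim \cD^n$, dominate the original deviation by the discrepancy between empirical averages on $T$ and $T'$, and then inject independent Rademacher signs $\sigma_i \in \{-1,1\}$ to reduce the problem to bounding the expected Rademacher complexity
\[
\mathcal{R}_n(\mathcal{F}) \;=\; \E_{T,\sigma}\sup_{f \in \mathcal{F}}\left|\frac{1}{n}\sum_{i=1}^n \sigma_i f(x_i)\right|.
\]
The heart of the proof is then to show $\mathcal{R}_n(\mathcal{F}) = O(\sqrt{d/n})$ with no residual factor of $\sqrt{\log(1/\eps)}$. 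For this I would combine Haussler's sharp bound on the $L_2$ covering numbers of a VC class, namely $\log N(\mathcal{F}, L_2(\hat\mu_T), \eta) = O(d \log(1/\eta))$, with Dudley's entropy integral (or Talagrand's generic chaining): the integral $\int_0^1 \sqrt{\log N(\mathcal{F}, L_2(\hat\mu_T), \eta)}\, d\eta$ then converges to $O(\sqrt{d})$, so after the usual normalization by $\sqrt{n}$ the Rademacher complexity is of the claimed order.

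Finally, I would promote the in-expectation bound to a high-probability statement via McDiarmid's bounded differences inequality: replacing any single sample in $T$ changes the supremum by at most $1/n$, so it concentrates around its expectation with deviation $O(\sqrt{\ln(1/\delta)/n})$ with probability at least $1-\delta$. Assembling the pieces yields
\[
\sup_{f \in \mathcal{F}}\left| \E_\cD[f] - \frac{1}{n}\sum_{i=1}^n f(x_i) \right| \;=\; O\bigl(\sqrt{d/n}\bigr) + O\bigl(\sqrt{\ln(1/\delta)/n}\bigr)
\]
with probability $\geq 1-\delta$, and choosing $n \geq b(d + \ln(1/\delta))\eps^{-2}$ for sufficiently large $b$ forces the right-hand side to be at most $\eps$. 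The main obstacle is the chaining step: a crude Sauer--Shelah union bound loses a multiplicative $\log(1/\eps)$ factor and only yields the weaker sample complexity $n = O((d\log(1/\eps) + \ln(1/\delta))\eps^{-2})$. Removing that logarithm in favor of the tight linear-in-$d$ dependence stated in the theorem is precisely what requires Haussler's refined covering-number estimate together with carefully executed chaining, as in the cited works of Li--Long--Srebro and Talagrand.
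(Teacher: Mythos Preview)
The paper does not prove Theorem~\ref{thm:sample} at all; it simply states it as a classical result from the cited references~\cite{lls,talagrand,vapnik71uniform} and uses it as a black box in the proof of Lemma~\ref{lem:terminate}. There is therefore no ``paper's own proof'' to compare against.

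That said, your outline is the standard modern route to this optimal uniform-convergence bound and is consistent with the cited sources: symmetrization to Rademacher complexity, Haussler's $L_2$ covering-number bound combined with Dudley's entropy integral (or Talagrand's chaining) to obtain $\mathcal{R}_n(\mathcal{F}) = O(\sqrt{d/n})$ without a spurious logarithm, and McDiarmid for the high-probability upgrade. Your remark that a naive Sauer--Shelah union bound loses a $\log(1/\eps)$ factor, and that the refined covering estimate is exactly what removes it, is accurate. As a sketch this is correct; the only caveat is that you have described a plan rather than a proof, and the chaining step in particular requires care to execute cleanly. For the purposes of this paper, however, no proof is expected here.
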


With this, we are ready to prove that Algorithm~\ref{alg:adaboost} terminates.

\begin{proof}[Proof of Lemma~\ref{lem:terminate}]
We need to show that the while-loop always terminates. So consider some distribution $\cD_k$ in round $k$ of Algorithm~\ref{alg:adaboost}. We claim the while-loop terminates if the sample $T_k$ is a $\gamma/2$-approximation for $(c,\cD_k,\cH)$. To see this, note that if it is a $\gamma/2$-approximation, then
\[
\left|\Pr_{x \sim \cD_k}[h(x) \neq c(x)] - \frac{|\{ x \in T_k : h(x)\neq c(x)\}|}{|T_k|}\right| \leq \gamma/2.
\]
Furthermore, we have that $h_k = h_{T_k}$ was obtained from the weak learner $\cW$ using the distribution $\cD_{T_k}$. Thus 
\begin{eqnarray*}
    1/2 - \gamma \leq
    \Pr_{x \sim \cD_{T_k}}[h_k(x) \neq c(x)] =
    |\{ x \in T_k : h_k(x)\neq c(x)\}|/|T_k|.
\end{eqnarray*}
It follows that $\Pr_{x \sim \cD_k}[h_k(x) \neq c(x)] \leq (1/2-\gamma) + \gamma/2 = 1/2 - \gamma/2$. The conclusion now follows from Theorem~\ref{thm:sample} which says that we have a $\gamma/2$-approximation with constant probability in every iteration of the while-loop, provided that $n \geq b(d + 2)\gamma^{-2}/4$, which is indeed satisfied for our choice of $n = ad\gamma^{-2}$ for $a$ large enough.
\end{proof}

\paragraph{Generalization Performance.}
We finally conclude by observing that our algorithm always produces a voting classifier $f = \sign(g(x))$ where $c(x)g(x) \geq \gamma/16$ for all $(x,c(x)) \in S$ (Lemma~\ref{lem:goodout} and Lemma~\ref{lem:terminate}). We now invoke Breiman's min-margin bound
\begin{theorem}[\cite{breiman1999prediction}]
\label{thm:margin}
Let $c : \cX \to \{-1,1\}$ be an unknown concept, $\cH \subseteq \cX \to \{-1,1\}$ a hypothesis set of VC-dimension $d$ and $\cD$ an arbitrary distribution over $\cX$. There is a universal constant $a > 0$ such that with probability at least $1-\delta$ over a set of $m$ samples $S \sim \cD^m$, it holds for every voting classifier $f(x) = \sign(g(x))$ with $c(x)g(x) \geq \gamma$ for all $(x,c(x)) \in S$ that
\[
\cL_\cD(f) \leq a \cdot \frac{d \ln(m) \ln(m/d) + \ln(1/\delta)}{\gamma^2 m}.
\]
\end{theorem}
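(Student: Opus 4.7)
The plan is to prove this via the classical sample-compression / subsampling argument that reduces a margin-based bound for arbitrary voting classifiers to an ordinary VC bound for a much smaller class of sub-voting classifiers. The key idea is that if $g = (1/k)\sum_{i=1}^k h_i$ has minimum training margin $\gamma$, then a random sub-vote over only $K' = \Theta(\gamma^{-2}\ln(m/\delta))$ hypotheses drawn i.i.d.\ uniformly from the $h_i$ approximates $g$ on every training point to within $\gamma/2$. Since such sub-votes live in a class $\cF_{K'}$ of bounded VC-dimension that does not depend on $g$, standard uniform convergence applies.

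First, I would construct the randomized compression. Fix $g$ with $c(x)g(x)\ge\gamma$ for all $(x,c(x))\in S$, and let $\tilde g = (1/K')\sum_{j=1}^{K'} h_{I_j}$ with $I_j$ i.i.d.\ uniform on $\{1,\dots,k\}$. For any fixed point $x$, $\E[\tilde g(x)]=g(x)$ and $h_{I_j}(x)\in\{-1,1\}$, so by Hoeffding $\Pr[|\tilde g(x)-g(x)|\ge \gamma/2]\le 2\exp(-K'\gamma^2/8)$. Choosing $K' = \Theta(\gamma^{-2}\ln m)$ and union bounding over the $m$ training points shows that with positive probability $c(x)\tilde g(x)\ge\gamma/2$ for all $(x,c(x))\in S$; in particular $\tilde g$ achieves zero empirical error on $S$ under the margin threshold $\gamma/4$. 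Simultaneously, for any $x\sim\cD$ with $c(x)g(x)\le 0$, the same Hoeffding bound gives $\Pr_{I}[c(x)\tilde g(x)\le\gamma/4]\ge 1 - 2\exp(-K'\gamma^2/32)$, which will let us transfer a bound on $\tilde g$ back to $g$.

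Next, I would control the complexity of the hypothesis class of sub-votes. The class $\cF_{K'}$ of functions $x\mapsto\operatorname{sign}((1/K')\sum_j h_j(x) - \gamma/4)$ with the $h_j\in\cH$ has VC-dimension $O(K' d\log(K'd))$ by standard composition arguments (Sauer--Shelah applied to tuples from $\cH$, combined with the bounded complexity of thresholding a fixed linear form). Plugging $K' = \Theta(\gamma^{-2}\ln m)$ into the classical VC generalization bound yields, with probability $1-\delta$ over $S$, uniformly over all $f\in\cF_{K'}$,
\[
\Pr_{\cD}[c(x)f(x)\le 0] \le \widehat{\Pr}_S[c(x)f(x)\le 0] + O\!\left(\frac{d\ln(m)\ln(m/d)+\ln(1/\delta)}{\gamma^2 m}\right),
\]
after carefully tracking that the $\ln(m/d)$ factor arises from turning the root-$m$ rate into a root-VC rate via a Talagrand-style argument on the margin-thresholded loss. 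Combining this with the two Hoeffding inequalities above—using that $\tilde g$ has zero empirical margin-$\gamma/4$ error with positive probability while $\cL_{\cD}(\operatorname{sign}\circ g)$ is at most $\Pr_{\cD,I}[c(x)\tilde g(x)\le\gamma/4]$ plus an exponentially small slack—gives the claimed bound on $\cL_{\cD}(f)$.

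The main obstacle is getting the order of quantifiers right: we need the bound to hold uniformly over all voting classifiers $g$ with training margin $\ge\gamma$ simultaneously, with high probability over $S$, even though $g$ may depend on $S$ in a complicated way. The subsampling device is precisely what resolves this: the sample-dependent object $g$ is approximated by a member of the sample-independent class $\cF_{K'}$, and uniform convergence is only ever invoked on $\cF_{K'}$. The remaining technicality—trading the slack between the $\gamma/4$ margin threshold used in the empirical count and the $0$ threshold used for $\cL_{\cD}$ so that the Hoeffding error on out-of-sample points is swallowed into the VC term—is routine and is what forces the specific combinatorial $\ln(m)\ln(m/d)$ factor in the final expression.
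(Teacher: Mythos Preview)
The paper does not prove Theorem~\ref{thm:margin}; it is quoted verbatim from Breiman~\cite{breiman1999prediction} and invoked as a black box (``The generalization of our single-round boosting algorithm follows immediately from Theorem~\ref{thm:margin} and the observations above''). There is therefore no proof in the paper to compare your proposal against.

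That said, your sketch is the standard route to this kind of margin bound, going back to Schapire--Freund--Bartlett--Lee and refined by Breiman for the min-margin case: randomly subsample $K'=\Theta(\gamma^{-2}\ln m)$ of the base hypotheses, use Hoeffding plus a union bound over $S$ to show that with positive probability the sub-vote retains margin $\gamma/2$ on all training points, apply ordinary VC uniform convergence to the data-independent class $\cF_{K'}$ of $K'$-term votes (VC-dimension $O(K' d\log K')$), and transfer the bound back to $g$ via a second Hoeffding step on out-of-sample points. The structure is correct. The only place that requires genuine care is the bookkeeping that yields the factor $\ln(m)\ln(m/d)$ rather than a cruder $\ln^2 m$; your remark that this ``forces the specific combinatorial $\ln(m)\ln(m/d)$ factor'' is in the right direction but is asserted rather than argued, and in a full proof you would need to be explicit about which form of the VC bound (and which Sauer--Shelah estimate on the growth function of $\cF_{K'}$) you invoke to get exactly this dependence.
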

The generalization of our single-round boosting algorithm follows immediately from Theorem~\ref{thm:margin} and the observations above.

\section{Conclusion}
In this work, we established strong barriers for the possibilities of parallelizing boosting. Concretely, to parallelize below $O(\gamma^{-1} \ln m)$ rounds incurs an exponential blow-up in the number of invocations of a weak learner. 

The classic algorithms, such as AdaBoost, use $O(\gamma^{-2} \ln m)$ rounds. Thus it is conceivable that boosting can be somewhat parallelized. We leave this as an exciting direction for future research.

We also complemented our lower bound by a near-optimal single-round boosting algorithm. A slightly insatisfactory aspect of our single-round algorithm is that performs sequential work in $O(\gamma^{-2} \ln m)$ rounds after having invoked the weak learner/trained a number of hypotheses. It would be interesting to obtain a truly parallel algorithm also when accounting for post-processing of trained models.

\section*{Acknowledgment}
The authors would like to thank Reza Shokri, Diptarka Chakraborty and the National University of Singapore for inviting both authors as speakers at the NUS CS Research Week 2023. We thank Reza, Diptarka and NUS for introducing us during this visit, thereby initiating the collaboration leading to the results in this paper.

\bibliography{res_arxiv}

\begin{thebibliography}{10}

\bibitem{abernethy2021multiclass}
J.~Abernethy, P.~Awasthi, and S.~Kale.
\newblock A multiclass boosting framework for achieving fast and provable
  adversarial robustness.
\newblock {\em arXiv preprint arXiv:2103.01276}, 2021.

\bibitem{DBLP:conf/stoc/AlonGHM21}
N.~Alon, A.~Gonen, E.~Hazan, and S.~Moran.
\newblock Boosting simple learners.
\newblock In {\em {STOC}}, pages 481--489. {ACM}, 2021.

\bibitem{bartlett}
P.~Bartlett, Y.~Freund, W.~S. Lee, and R.~E. Schapire.
\newblock {Boosting the margin: a new explanation for the effectiveness of
  voting methods}.
\newblock {\em The Annals of Statistics}, 26(5):1651 -- 1686, 1998.

\bibitem{breiman1999prediction}
L.~Breiman.
\newblock Prediction games and arcing algorithms.
\newblock {\em Neural computation}, 11(7):1493--1517, 1999.

\bibitem{brukhim2021boosting}
N.~Brukhim, E.~Hazan, and K.~Singh.
\newblock A boosting approach to reinforcement learning.
\newblock {\em arXiv preprint arXiv:2108.09767}, 2021.

\bibitem{xgboost}
T.~Chen and C.~Guestrin.
\newblock Xgboost: A scalable tree boosting system.
\newblock In {\em KDD}, pages 785--794. ACM, 2016.

\bibitem{freund}
Y.~Freund.
\newblock Boosting a weak learning algorithm by majority.
\newblock {\em Information and Computation}, 121(2):256--285, 1995.

\bibitem{adaboost}
Y.~Freund and R.~E. Schapire.
\newblock A decision-theoretic generalization of on-line learning and an
  application to boosting.
\newblock {\em Journal of computer and system sciences}, 55(1):119--139, 1997.

\bibitem{gradboost}
J.~H. Friedman.
\newblock {Greedy function approximation: A gradient boosting machine.}
\newblock {\em The Annals of Statistics}, 29(5):1189 -- 1232, 2001.

\bibitem{gao}
W.~Gao and Z.~Zhou.
\newblock On the doubt about margin explanation of boosting.
\newblock {\em Artif. Intell.}, 203:1--18, 2013.

\bibitem{boostingLowerBound}
A.~Gr{\o}nlund, L.~Kamma, K.~Green~Larsen, A.~Mathiasen, and J.~Nelson.
\newblock Margin-based generalization lower bounds for boosted classifiers.
\newblock {\em Advances in Neural Information Processing Systems}, 32, 2019.

\bibitem{Hoeffding}
W.~Hoeffding.
\newblock Probability inequalities for sums of bounded random variables.
\newblock {\em Journal of the American Statistical Association}, 58:13--30,
  1963.

\bibitem{DBLP:journals/jcss/KalaiS05}
A.~T. Kalai and R.~A. Servedio.
\newblock Boosting in the presence of noise.
\newblock {\em J. Comput. Syst. Sci.}, 71(3):266--290, 2005.

\bibitem{lightGBM}
G.~Ke, Q.~Meng, T.~Finley, T.~Wang, W.~Chen, W.~Ma, Q.~Ye, and T.-Y. Liu.
\newblock Lightgbm: A highly efficient gradient boosting decision tree.
\newblock In {\em NIPS}, 2017.

\bibitem{kearns1988learning}
M.~Kearns.
\newblock Learning boolean formulae or finite automata is as hard as factoring.
\newblock {\em Technical Report TR-14-88 Harvard University Aikem Computation
  Laboratory}, 1988.

\bibitem{kearns1994cryptographic}
M.~Kearns and L.~Valiant.
\newblock Cryptographic limitations on learning boolean formulae and finite
  automata.
\newblock {\em Journal of the ACM (JACM)}, 41(1):67--95, 1994.

\bibitem{baggingIsOptimal}
K.~G. Larsen.
\newblock Bagging is an optimal {PAC} learner.
\newblock {\em arXiv preprint}, arXiv/2212.02264, 2022.

\bibitem{optimalWeakToStrong}
K.~G. Larsen and M.~Ritzert.
\newblock Optimal weak to strong learning.
\newblock {\em Advances in Neural Information Processing Systems (NeurIPS
  2022)}, 2022.
\newblock To appear.

\bibitem{lls}
Y.~Li, P.~Long, and A.~Srinivasan.
\newblock Improved bounds on the sample complexity of learning.
\newblock {\em Journal of Computer and System Sciences}, 62:516 -- 527, 2001.

\bibitem{DBLP:conf/colt/LongS05}
P.~M. Long and R.~A. Servedio.
\newblock Martingale boosting.
\newblock In P.~Auer and R.~Meir, editors, {\em Learning Theory, 18th Annual
  Conference on Learning Theory, {COLT} 2005, Bertinoro, Italy, June 27-30,
  2005, Proceedings}, volume 3559 of {\em Lecture Notes in Computer Science},
  pages 79--94. Springer, 2005.

\bibitem{long}
P.~M. Long and R.~A. Servedio.
\newblock Algorithms and hardness results for parallel large margin learning.
\newblock {\em J. Mach. Learn. Res.}, 14(1):3105–3128, jan 2013.

\bibitem{DBLP:journals/jcss/MansourM02}
Y.~Mansour and D.~A. McAllester.
\newblock Boosting using branching programs.
\newblock {\em J. Comput. Syst. Sci.}, 64(1):103--112, 2002.

\bibitem{montgomery}
S.~J. Montgomery-Smith.
\newblock The distribution of rademacher sums.
\newblock {\em Proceedings of the American Mathematical Society},
  109(2):517--522, 1990.

\bibitem{survey}
A.~Natekin and A.~Knoll.
\newblock Gradient boosting machines, a tutorial.
\newblock {\em Frontiers in Neurorobotics}, 7, 2013.

\bibitem{schapire1990strength}
R.~E. Schapire.
\newblock The strength of weak learnability.
\newblock {\em Machine learning}, 5(2):197--227, 1990.

\bibitem{understandingMachineLearning}
S.~Shalev-Shwartz and S.~Ben-David.
\newblock {\em Understanding machine learning: From theory to algorithms}.
\newblock Cambridge university press, 2014.

\bibitem{shen2022federated}
Z.~Shen, H.~Hassani, S.~Kale, and A.~Karbasi.
\newblock Federated functional gradient boosting.
\newblock In {\em International Conference on Artificial Intelligence and
  Statistics}, pages 7814--7840. PMLR, 2022.

\bibitem{talagrand}
M.~Talagrand.
\newblock {Sharper Bounds for Gaussian and Empirical Processes}.
\newblock {\em The Annals of Probability}, 22(1):28 -- 76, 1994.

\bibitem{vapnik71uniform}
V.~N. Vapnik and A.~Y. Chervonenkis.
\newblock On the uniform convergence of relative frequencies of events to their
  probabilities.
\newblock {\em Theory of Probability and its Applications}, 16(2):264--280,
  1971.

\end{thebibliography}
\bibliographystyle{abbrv}

\newpage
\appendix
\section{Sampling Without Replacement}
\label{sec:withoutRep}
We prove the following lemma for sampling without replacement from a finite population
\begin{lemma}
Let $Y_1,\dots,Y_n$ be samples without replacement from a finite population of real values between $0$ and $1/\rho$ for $\rho > 0$. Then for any $0< \delta < 1$ and any $\mu \leq \E[\sum_i Y_i]$, it holds that
\[
\Pr[\sum_i Y_i \leq (1-\delta)\mu] \leq \exp(-\rho \delta^2 \mu/2)
\]
and for any $\gamma \geq \E[\sum_i Y_i]$ it holds that
\[
\Pr[\sum_i Y_i \geq (1+\delta)\mu] \leq \exp(-\rho \delta^2 \mu/3).
\]
\end{lemma}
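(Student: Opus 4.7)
The plan is to reduce sampling without replacement to the standard multiplicative Chernoff bound for independent bounded variables, using Hoeffding's classical reduction theorem. First I would rescale: set $Z_i = \rho Y_i$, so each $Z_i$ lies in $[0,1]$ and $\sum_i Z_i = \rho \sum_i Y_i$ with $\E[\sum_i Z_i] \geq \rho \mu$. The target inequalities become $\Pr[\sum_i Z_i \leq (1-\delta)\rho\mu] \leq \exp(-\delta^2 \rho \mu/2)$ and $\Pr[\sum_i Z_i \geq (1+\delta)\rho\mu] \leq \exp(-\delta^2 \rho \mu/3)$, which are exactly the standard multiplicative Chernoff bounds for bounded $[0,1]$ variables when $\sum_i Z_i$ has mean at least $\rho\mu$ (resp.\ at most $\rho\mu$).

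Next I would invoke Hoeffding's theorem (1963): if $Z_1,\dots,Z_n$ are drawn without replacement from a finite population and $Z_1',\dots,Z_n'$ are drawn i.i.d.\ from the same population, then for every convex $\phi$,
\[
\E[\phi(\textstyle\sum_i Z_i)] \leq \E[\phi(\textstyle\sum_i Z_i')].
\]
Applying this with $\phi(x)=e^{\lambda x}$ ($\lambda>0$) and with $\phi(x)=e^{-\lambda x}$ (both convex) shows that the moment generating function of $\sum_i Z_i$ under sampling without replacement is bounded above by that of the i.i.d.\ sum. Hence any Chernoff-style tail bound derived from Markov's inequality on the exponential MGF for the independent case transfers directly to our setting.

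I would then carry out the standard Chernoff calculation for independent $[0,1]$ variables. For the lower tail, pick $\lambda>0$, use $\E[e^{-\lambda Z_i'}] \leq 1 - (1-e^{-\lambda})\E[Z_i'] \leq \exp(-(1-e^{-\lambda})\E[Z_i'])$, multiply over $i$, apply Markov, and optimize $\lambda$ to get the bound $\exp(-\delta^2 \rho\mu/2)$ (replacing $\E[\sum Z_i']$ by the smaller quantity $\rho\mu$ is legitimate since the bound is monotone on that side). The upper tail is analogous using $\E[e^{\lambda Z_i'}] \leq \exp((e^\lambda-1)\E[Z_i'])$ and the inequality $(1+\delta)\ln(1+\delta)-\delta \geq \delta^2/3$ for $0<\delta<1$, yielding $\exp(-\delta^2 \rho\mu/3)$.

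There is no genuine obstacle here — the only point that needs a little care is checking that the multiplicative Chernoff bound really holds for general $[0,1]$ variables (not only Bernoullis) with the constants $1/2$ and $1/3$, which follows because the MGF bounds $\E[e^{\lambda Z}] \leq \exp((e^\lambda-1)\E[Z])$ and $\E[e^{-\lambda Z}] \leq \exp(-(1-e^{-\lambda})\E[Z])$ hold for any $Z\in[0,1]$ by convexity of the exponential, and that the reduction step is applied correctly with the right direction of the inequality $\mu \leq \E[\sum Y_i]$ (respectively $\mu \geq \E[\sum Y_i]$) in each of the two tails.
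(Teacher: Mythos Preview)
Your proposal is correct and follows essentially the same approach as the paper: rescale by $\rho$ to put the population in $[0,1]$, invoke Hoeffding's theorem to bound the moment generating function of the without-replacement sum by that of the i.i.d.\ sum, then use convexity of the exponential to get $\E[e^{tZ}] \leq 1 + (e^t-1)\E[Z]$ for $Z\in[0,1]$ and finish with the standard Chernoff optimization. The paper is slightly terser (it just establishes $\E[e^{t\sum Y_i}] \leq \prod_i(1+\E[Y_i](e^t-1))$ and then defers to the textbook Chernoff derivation), but the argument is the same.
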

\begin{proof}
We start by rescaling all elements in the population by a factor $\rho$. This changes $\E[\sum_i Y_i]$ by a factor $\rho$ and guarantees that all values are between $0$ and $1$. We thus need to show that the two probabilities are bounded by $\exp(- \delta^2 \mu/2)$ and $\exp(-\delta^2 \mu/3)$ for these rescaled variables. Notice that these two are precisely the standard Chernoff bounds for sums of independent 0/1 random variables. The proof of the Chernoff bounds follows from proving that 
\begin{eqnarray}
\label{eq:chern}
\E[\exp(t \sum_i Y_i)] \leq \prod_i \left(1 + \E[Y_i](e^t-1)\right).
\end{eqnarray}
for any $t \in \R$. We thus prove~\eqref{eq:chern} for our rescaled variables and refer the reader to standard proofs of the Chernoff bound from thereon.

Now let $X_1,\dots,X_n$ be independent and uniform random samples with replacement from the population. Then $\E[\sum_i Y_i] = \E[\sum_i X_i]$.
Since $x \to \exp(t x)$ is a convex and continous function for any $t \in \R$, it follows from Hoeffding~\cite{Hoeffding} that $\E[\exp(t \sum_i Y_i)] \leq \E[\exp(t \sum_i X_i)] = \prod_i \E[\exp(t X_i)]$. Again using convexity of $\exp(t x)$, we get that for any $x \in [0,1]$, it holds that $\exp(t x) \leq (1-x)\exp(t 0) + x\exp(t 1)$. Since we rescaled the population, every $X_i$ takes values between $0$ and $1$. Hence $\E[\exp(t X_i)] \leq \E[(1-X_i) + X_i \exp(t)] = 1 + \E[X_i](e^t-1) = 1 + \E[Y_i](e^t-1) $.
\end{proof}

\end{document}